\newtheorem{thm}{Theorem}
\newtheorem{lemma}{Lemma}
\newcommand{\minimize}{\operatorname*{minimize}}
\newcommand{\maximize}{\operatorname*{maximize}}
\newcommand{\bfx}{\mathbf{x}}
\newcommand{\bfz}{\mathbf{z}}
\newcommand{\bfh}{\mathbf{h}}
\newcommand{\bfc}{\mathbf{c}}
\newcommand{\bftheta}{\boldsymbol\theta}
\newcommand{\reals}{\mathbb R}
\newcommand{\argmin}{\mathop{\rm argmin}}
\newcommand{\algorithmicinput}{\textbf{input}}
\newcommand{\algorithmicoutput}{\textbf{output}}
\newcommand{\INPUT}{\item[\algorithmicinput]}
\newcommand{\OUTPUT}{\item[\algorithmicoutput]}
\title{End-to-End Efficient Representation Learning\\
via Cascading Combinatorial Optimization}
\author{
Yeonwoo Jeong, Yoonsung Kim, Hyun Oh Song\\
Department of Computer Science and Engineering, Seoul National University, Seoul, Korea\\
{\tt\small \{yeonwoo, yskim227, hyunoh\}@mllab.snu.ac.kr}
}
\begin{document}

\maketitle

\begin{abstract}
We develop hierarchically quantized efficient embedding representations for similarity-based search and show that this representation provides not only the state of the art performance on the search accuracy but also provides several orders of speed up during inference. The idea is to hierarchically quantize the representation so that the quantization granularity is greatly increased while maintaining the accuracy and keeping the computational complexity low. We also show that the problem of finding the optimal sparse compound hash code respecting the hierarchical structure can be optimized in polynomial time via minimum cost flow in an equivalent flow network. This allows us to train the method end-to-end in a mini-batch stochastic gradient descent setting. Our experiments on Cifar100 and ImageNet datasets show the state of the art search accuracy while providing several orders of magnitude search speedup respectively over exhaustive linear search over the dataset.
\end{abstract}

\vspace{-1.5em}
\section{Introduction}
\vspace{-0.5em}
Learning the feature embedding representation that preserves the notion of similarities among the data is of great practical importance in machine learning and vision and is at the basis of modern similarity-based search \cite{facenet, npairs}, verification \cite{deepface}, clustering \cite{seanbell}, retrieval \cite{liftedstruct,facility}, zero-shot learning \cite{zeroshot1,zeroshot2}, and other related tasks. In this regard, deep metric learning methods \cite{seanbell,facenet,npairs} have shown advances in various embedding tasks by training deep convolutional neural networks end-to-end encouraging similar pairs of data to be close to each other and dissimilar pairs to be farther apart in the embedding space. 

Despite the progress in improving the embedding representation accuracy, improving the inference efficiency and scalability of the representation in an end-to-end optimization framework is relatively less studied. Practitioners deploying the method on large-scale applications often resort to employing post-processing techniques such as embedding thresholding \cite{agrawal2014,zhai2017} and vector quantization \cite{survey_learningtohash} at the cost of the loss in the representation accuracy. Recently, Jeong \& Song \cite{jeong2018} proposed an end-to-end learning algorithm for quantizable representations which jointly optimizes the quality of the convolutional neural network based embedding representation and the performance of the corresponding sparsity constrained compound binary hash code and showed significant retrieval speedup on ImageNet \cite{imagenet} without compromising the accuracy.


In this work, we seek to learn hierarchically quantizable representations and propose a novel end-to-end learning method significantly increasing the quantization granularity while keeping the time and space complexity manageable so the method can still be efficiently trained in a mini-batch stochastic gradient descent setting. Besides the efficiency issues, however, naively increasing the quantization granularity could cause a severe degradation in the search accuracy or lead to dead buckets hindering the search speedup. 

To this end, our method jointly optimizes both the sparse compound hash code and the corresponding embedding representation respecting a hierarchical structure. We alternate between performing cascading optimization of the optimal sparse compound hash code per each level in the hierarchy and updating the neural network to adjust the corresponding embedding representations at the active bits of the compound hash code.

Our proposed learning method outperforms both the reported results in \cite{jeong2018} and the state of the art deep metric learning methods \cite{facenet, npairs} in retrieval and clustering tasks on Cifar-100 \cite{cifar100} and ImageNet \cite{imagenet} datasets while, to the best of our knowledge, providing the highest reported inference speedup on each dataset over exhaustive linear search.
\vspace{-0.8em}
\section{Related works}\vspace{-0.5em}
Embedding representation learning with neural networks has its roots in Siamese networks \cite{signatureVerification, contrastive} where it was trained end-to-end to pull similar examples close to each other and push dissimilar examples at least some margin away from each other in the embedding space. \cite{signatureVerification} demonstrated the idea could be used for signature verification tasks. The line of work since then has been explored in wide variety of practical applications such as face recognition \cite{deepface}, domain adaptation \cite{domaintransduction}, zero-shot learning \cite{zeroshot1,zeroshot2}, video representation learning \cite{triplet_video}, and similarity-based interior design \cite{seanbell}, etc.

Another line of research focuses on learning binary hamming ranking \cite{xia2014supervised, zhao2015deep, hammingmetric, li2017deep} representations via neural networks. Although comparing binary hamming codes is more efficient than comparing continuous embedding representations, this still requires the linear search over the entire dataset which is not likely to be as efficient for large scale problems. \cite{cao2016deep, liu2017learning} seek to vector quantize the dataset and back propagate the metric loss, however, it requires repeatedly running k-means clustering on the entire dataset during training with prohibitive computational complexity.

We seek to jointly learn the hierarchically quantizable embedding representation and the corresponding sparsity constrained binary hash code in an efficient mini-batch based end-to-end learning framework. Jeong \& Song \cite{jeong2018} motivated maintaining the hard constraint on the sparsity of hash code to provide guaranteed retrieval inference speedup by only considering $k_s$ out of $d$ buckets and thus avoiding linear search over the dataset. We also explicitly maintain this constraint, but at the same time, greatly increasing the number of representable buckets by imposing an efficient hierarchical structure on the hash code to unlock significant improvement in the speedup factor.
\vspace{-0.5em}

\section{Problem formulation}\vspace{-0.5em}
Consider the following hash function 
\[r(\bfx)=\argmin_{\bfh \in \{0,1\}^d} -f(\bfx; \bftheta)^\intercal \bfh~\] under the constraint that $\|\bfh\|_1 = k_s$. The idea is to optimize the weights in the neural network $f(\cdot; \bftheta): \mathcal{X} \rightarrow \reals^d$, take $k_s$ highest activation dimensions, activate the corresponding dimensions in the binary compound hash code $\bfh$, and hash the data $\bfx \in \mathcal{X}$ into the corresponding active buckets of a hash table $\mathcal{H}$. During inference, a query $\bfx_q$ is given, and all the hashed items in the $k_s$ active bits set by the hash function $r(\bfx_q)$ are retrieved as the candidate nearest items. Often times \cite{survey_learningtohash}, these candidates are reranked based on the euclidean distance in the base embedding representation $f(\cdot; \bftheta)$ space. 

Given a query $\bfh_q$, the expected number of retrieved items is $\sum_{i\neq q} \Pr(\bfh_i^\intercal \bfh_q \neq 0)$. Then, the expected speedup factor \cite{jeong2018} (SUF) is the ratio between the total number of items and the expected number of retrieved items. Concretely, it becomes $(\Pr(\bfh_i^\intercal \bfh_q \neq 0))^{-1} = (1 - {d-k_s \choose k_s}/{d \choose k_s})^{-1}$. In case $d \gg k_s$, this ratio approaches $d/{k_s}^2$.

Now, suppose we design a hash function $r(\bfx)$ so that the function has total $\mathrm{dim}(r(\bfx))=d^k$ (\ie~ exponential in some integer parameter $k>1$) indexable buckets. The expected speedup factor \cite{jeong2018} approaches $d^k/k_s^2$ which means the query time speedup increases linearly with the number of buckets. However, naively increasing the bucket size for higher speedup has several major downsides. First, the hashing network has to output and hold $d^k$ activations in the memory at the final layer which can be unpractical in terms of the space efficiency for large scale applications. Also, this could also lead to \emph{dead buckets} which are under-utilized and degrade the search speedup. On the other hand, hashing the items uniformly at random among the buckets could help to alleviate the dead buckets but this could lead to a severe drop in the search accuracy.

Our approach to this problem of maintaining a large number of representable buckets while preserving the accuracy and keeping the computational complexity manageable is to enforce a hierarchy among the optimal hash codes in an efficient tree structure. First, we use $\mathrm{dim}(f(\bfx)) = d k$ number of activations instead of $d^k$ activations in the last layer of the hash network. Then, we define the unique mapping between the $dk$ activations to $d^k$ buckets by the following procedure. 

Denote the hash code as $\widetilde\bfh = [\bfh^1, \ldots, \bfh^k] \in \{0,1\}^{d\times k}$ where $\| \bfh^v \|_1 = 1 ~~\forall v \neq k$ and $\| \bfh^k \|_1 = k_s$. The superscript denotes the level index in the hierarchy. Now, suppose we construct a tree $\mathcal{T}$ with branching factor $d$, depth $k$ where the root node has the level index of $0$. Let each $d^k$ leaf node in $\mathcal{T}$ represent a bucket indexed by the hash function $r(\bfx)$. Then, we can interpret each $\bfh^v$ vector to indicate the branching from depth $v-1$ to depth $v$ in $\mathcal{T}$. Note, from the construction of $\widetilde\bfh$, the branching is unique until level $k-1$, but the last branching to the leaf nodes is multi-way because $k_s$ bits are set due to the sparsity constraint at level $k$. \Cref{fig:hash_tree} illustrates an example translation from the given hash activation to the tree bucket index for $k\!=\!2$ and $k_s\!=\!2$. Concretely, the hash function $r(\bfx): \mathbb{R}^{d\times k} \rightarrow \{0,1\}^{d^k}$ can be expressed compactly as \Cref{eqn:hash_function}. \vspace{-0.5em}
\small
\begin{align}
\label{eqn:hash_function}
    r&(\bfx) = \bigotimes_{v=1}^k~ \argmin_{\substack{\bfh^v}} -\left(f(\bfx; \bftheta)^v\right)^\intercal \bfh^v \\
    &\text{subject to } \|\bfh^v\|_1=\begin{dcases}1 & \forall v\neq k\\ k_s &v = k \end{dcases} \text{ and } \bfh^v \in \{0,1\}^d \nonumber
\end{align}
\normalsize
where $\bigotimes$ denotes the tensor multiplication operator between two vectors. The following section discusses how to find the optimal hash code $\widetilde\bfh$ and the corresponding activation $f(\bfx; \bftheta) = [f(\bfx; \bftheta)^1, \ldots, f(\bfx; \bftheta)^k] \in \reals^{d \times k}$ respecting the hierarchical structure of the code.

\begin{figure}[ht]
\centering
\includegraphics[width=\columnwidth]{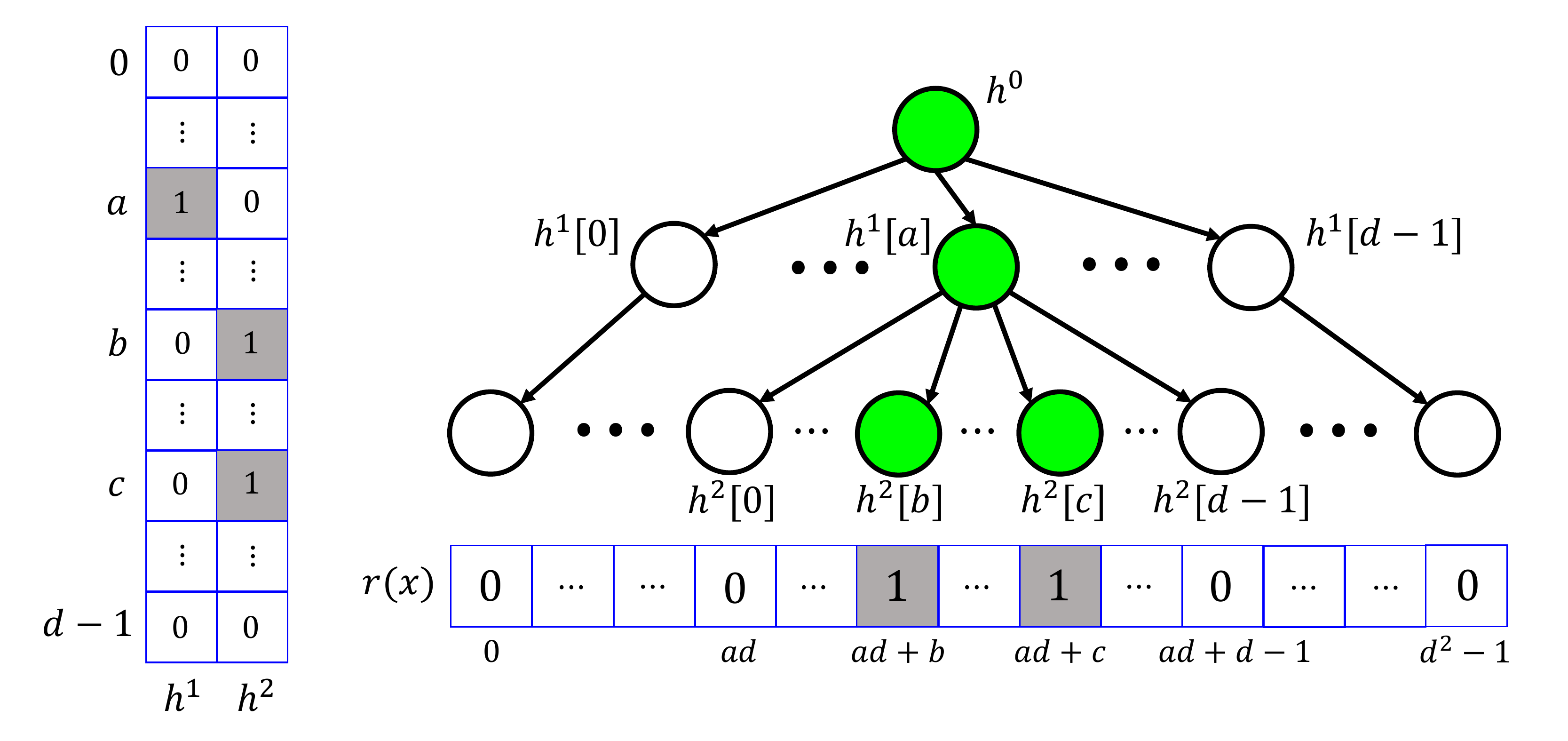}
\caption{Example hierarchical structure for $k\!=\!2$ and $k_s\!=\!2$. (Left) The hash code for each embedding representation $[f(\bfx_i; \bftheta)^1, f(\bfx_i; \bftheta)^2] \in \reals^{2d}$. (Right) Corresponding activated hash buckets out of total $d^2$ buckets.}
\label{fig:hash_tree}
\end{figure} \vspace{-0.3em}

\section{Methods}\vspace{-0.7em}
To compute the optimal set of embedding representations and the corresponding hash code,
the embedding representations are first required in order to infer which $k_s$ activations to set in the hash code, but to learn the embedding representations, it requires the hash code to determine which dimensions of the activations to adjust so that similar items would get hashed to the same buckets and vice versa. We take the alternating minimization approach iterating over computing the sparse hash codes respecting the hierarchical quantization structure and updating the network parameters indexed at the given hash codes per each mini-batch. \Cref{sec:method_hash} and \Cref{sec:method_embed} formalize the subproblems in detail. \vspace{-0.5em}

\subsection{Learning the hierarchical hash code} \vspace{-0.3em}
\label{sec:method_hash}
Given a set of continuous embedding representation $\{f(\bfx_i; \bftheta)\}_{i=1}^n$, we wish to compute the optimal binary hash code $\{\bfh_1, \ldots, \bfh_n\}$ so as to hash similar items to the same buckets and dissimilar items to different buckets. Furthermore, we seek to constrain the hash code to simultaneously maintain the hierarchical structure and the hard sparsity conditions throughout the optimization process. Suppose items $\bfx_i$ and $\bfx_j$ are dissimilar items, in order to hash the two items to different buckets, at each level of $\mathcal{T}$, we seek to encourage the hash code for each item at level $v$, $\bfh_i^v$ and $\bfh_j^v$ to differ. To achieve this, we optimize the hash code for all items per each level sequentially in cascading fashion starting from the first level $\{\bfh_1^1, \ldots, \bfh_n^1\}$ to the leaf nodes $\{\bfh_1^k, \ldots, \bfh_n^k\}$ as shown in \Cref{eqn:energy_seq}.
\footnotesize
\begin{align}
\label{eqn:energy_seq}
    &\minimize_{\bfh^k_{1:n},\ldots, \bfh^1_{1:n}}\quad \underbrace{\sum_{v=1}^{k} \sum_{i=1}^{n} -({f(\bfx_i; \bftheta)^{v}})^\intercal~  \bfh^v_i}_{\text{unary term}}\\
    &+ \underbrace{\sum_{v=2}^k \sum_{(i,j) \in \mathcal{N}} {\bfh^v_i}^\intercal Q' {\bfh^v_j}\prod_{w=1}^{v-1}\mathds{1}(\bfh^w_i=\bfh^w_j) }_{\text{sibling penalty}} + \underbrace{\sum_{v=1}^{k}\sum_{(i,j) \in \mathcal{N}} {\bfh_i^v}^\intercal P' {\bfh^v_j}}_{\text{orthogonality}} \nonumber\\
&\text{\ \  subject to }~~ \| \bfh^v_i \| = \begin{dcases}1 & \forall v\neq k\\ k_s &v = k \end{dcases},~ \bfh^v_i\in \{0,1\}^d,~ \forall i, \nonumber
\end{align}
\normalsize
where $\mathcal{N}$ denotes the set of dissimilar pairs of data and $\mathds{1}(\cdot)$ denotes the indicator function. Concretely, given the hash codes from \emph{all the previous levels}, we seek to minimize the following discrete optimization problem in \Cref{eqn:energy_tree}, subject to the same constraints as in \Cref{eqn:energy_seq}, sequentially for all levels\footnote{In \Cref{eqn:energy_tree}, we omit the dependence of $v$ for all $\bfh_1,\ldots,\bfh_n$ to avoid the notation clutter.} $v \in \{1,\ldots,k\}$. The unary term in the objective encourages selecting as large elements of each embedding vector as possible while the second term loops over \emph{all pairs of dissimilar siblings} and penalizes for their orthogonality. The last term encourages selecting as orthogonal elements as possible for a pair of hash codes from different classes in the current level $v$. The last term also makes sure, in the event that the second term becomes zero, the hash code still respects orthogonality among dissimilar items. This can occur when the hash code for all the previous levels was computed perfectly splitting dissimilar pairs into different branches and the second term becomes zero.
\vspace{-0.5em}
\scriptsize
\begin{align}
\label{eqn:energy_tree}
&\minimize_{\bfh_1, \ldots, \bfh_n}\quad \underbrace{\sum_{i=1}^{n} -({f(\bfx_i; \bftheta)^{v}})^\intercal~  \bfh_i}_{\text{unary term}} + \underbrace{\sum_{(i,j) \in \mathcal{S}^v}  \bfh_i^\intercal Q' \bfh_j}_{\text{sibling penalty}} + \underbrace{\sum_{(i,j) \in \mathcal{N}} \bfh_i^\intercal P' \bfh_j}_{\text{orthogonality}}
\end{align}
\normalsize

\noindent where $\mathcal{S}^v = \left\{(i,j) \in \mathcal{N} \mid \bfh^w_i = \bfh^w_j, ~\forall w = 1,\ldots,v-1 \right\}$ denotes the set of pairs of siblings at level $v$ in $\mathcal{T}$, and $Q', P'$ encodes the pairwise cost for the sibling and the orthogonality terms respectively. However, optimizing \Cref{eqn:energy_tree} is NP-hard in general even in the simpler case of $k_s=1, k=1, d>2$ \cite{boykov_fast,jeong2018}. Inspired by \cite{jeong2018}, we use the average embedding of each class within the minibatch $\bfc_p^v = \frac{1}{m} \sum_{i:y_i = p} f(\bfx_i; \bftheta)^v \in \reals^{d}$ as shown in \Cref{eqn:energy_tree_avg}. 

\small
\begin{align}
&\minimize_{\bfz_1, \ldots, \bfz_{n_c}}\quad \underbrace{\sum_{p=1}^{n_c} -({\mathbf{c}_p^v})^\intercal \mathbf{z}_p +\sum_{\substack{(p,q) \in \mathcal{S}_z^v \\ p \neq q}} {\mathbf{z}_p}^\intercal Q \mathbf{z}_q + \sum_{p\neq q}  {\mathbf{z}_p}^\intercal P \mathbf{z}_q}_{:=\hat{g}(\bfz_1, \ldots, \bfz_{n_c})} \nonumber\\
&\text{\ \  subject to }~~ \| \mathbf{z}_p \| = \begin{dcases}1 & \forall v \neq k\\k_s & v=k\end{dcases},~ \mathbf{z}_p\in \{0,1\}^d,~ \forall p,
\label{eqn:energy_tree_avg}
\end{align}
\normalsize

\noindent where $\mathcal{S}_z^v=\left\{(p,q)\mid \bfz_p^w=\bfz_q^w, ~\forall w =1,\ldots,v-1  \right\}$, $n_c$ is the number of unique classes in the minibatch, and we assume each class has $m$ examples in the minibatch (\ie~ \textit{npairs} \cite{npairs} minibatch construction). Note, in accordance with the deep metric learning problem setting \cite{facenet,npairs,jeong2018}, we assume we are given access to the label adjacency information only within the minibatch. \\

\noindent The objective in \Cref{eqn:energy_tree_avg} upperbounds the objective in \Cref{eqn:energy_tree} (denote as $g(\cdot; \bftheta)$) by a gap $M(\bftheta)$ which depends only on $\bftheta$. Concretely, rewriting the summation in the unary term in $g$, we get
\small
\begin{align}
\label{eqn:bound}
&g(\bfh_1, \ldots, \bfh_n; \bftheta) = \sum_p^{n_c} \sum_{i:y_i=p} -({f(\bfx_i; \bftheta)^v})^\intercal~  \bfh_i\\
&\hspace{8em} + \sum_{(i,j)\in\mathcal{S}^v}  \bfh_i^\intercal Q' \bfh_j + \sum_{(i,j)\in\mathcal{N}} \bfh_i^\intercal P' \bfh_j \nonumber\\
&\leq\sum_p^{n_c} \sum_{i: y_i=p} -({\mathbf{c}_p^{v}})^\intercal~  \bfh_i + \sum_{(i,j)\in\mathcal{S}^v}  \bfh_i^\intercal Q' \bfh_j + \sum_{(i,j)\in\mathcal{N}} \bfh_i^\intercal P' \bfh_j \nonumber\\
&\quad +\underbrace{\maximize_{\hat\bfh_1, \ldots, \hat\bfh_n} \sum_p^{n_c} \sum_{i:y_i = p} (\bfc_p^{v} - f(\bfx_i; \bftheta)^{v})^\intercal~  \hat\bfh_i}_{:=M(\bftheta)}. \nonumber
\end{align}
\normalsize

\noindent Minimizing the upperbound in \Cref{eqn:bound} over $\bfh_1,\ldots,\bfh_n$ is identical to minimizing the objective $\hat{g}(\bfz_1, \ldots, \bfz_{n_c})$ in \Cref{eqn:energy_tree_avg} since each example $j$ in class $i$ shares the same class mean embedding vector $\bfc_i$. Absorbing the factor $m$ into the cost matrices \ie~ $Q=mQ'$ and $P = mP'$, we arrive at the upperbound minimization problem defined in \Cref{eqn:energy_tree_avg}. In the upperbound problem \Cref{eqn:energy_tree_avg}, we consider the case where the pairwise cost matrices are diagonal matrices of non-negative values. \Cref{thm:equivalence} in the following subsection proves that finding the optimal solution of the upperbound problem in \Cref{eqn:energy_tree_avg} is equivalent to finding the minimum cost flow solution of the flow network $G'$ illustrated in \Cref{fig:mcf}. Section B in the supplementary material shows the running time to compute the minimum cost flow (MCF) solution is approximately linear in $n_c$ and $d$. On average, it takes 24 \textit{ms} and 53 \textit{ms} to compute the MCF solution (discrete update) and to take a gradient descent step with npairs embedding \cite{npairs} (network update), respectively on a machine with 1 TITAN-XP GPU and Xeon E5-2650.

\subsection{Equivalence of the optimization problem to minimum cost flow} \vspace{-0.5em}
\begin{figure*}[ht]
\centering
\resizebox{2.0\columnwidth}{!}{
\begin{tikzpicture}[>=stealth',node distance=0.7cm]

\tikzstyle{vertex}=[circle,thick,draw,minimum size=0.8cm,inner sep=0.1pt]
\tikzstyle{group}=[inner sep=3.0pt,dotted,rounded corners,line width=1.5pt,draw=red]

\node [vertex,fill=red!30,minimum size=0.8cm] (s) {$s$};
\node (in) [left=0.7cm of s,
	label={110:\texttt{Input flow}},
	label={190:$n_c k_s$}] {}
	edge[post] node {} (s);

\node [vertex] (ap) [right= 1.5cm of s] {$a_p$}
	edge[pre] node[midway,fill=white] {$k_s, 0$} (s);
\node (ais) [above = 0.1cm of ap] {$\vdots$};
\node (aie) [below = 0.1cm of ap] {$\vdots$};
\node (ai) [group, draw=red,fit = (ais) (ap) (aie),label={[label distance=0.35cm]130:$A_r$}] {};

\node (a1) [above = 0.5cm of ais,label={[label distance=0.3cm]180:$A_{r-1}$}] {$\vdots$};
\draw[group] ([xshift=-0.35cm]a1.north west) -- ([xshift=-0.35cm,yshift=-0.2cm]a1.south west) -- ([xshift=0.35cm,yshift=-0.2cm]a1.south east) -- ([xshift=0.35cm]a1.north east);

\node (al) [below = 0.5cm of aie,label={[label distance=0.3cm]170:$A_{r+1}$}] {$\vdots$};
\draw[group] ([xshift=-0.35cm,yshift=-0.2cm]al.south west) -- ([xshift=-0.35cm,yshift=0.2cm]al.north west) -- ([xshift=0.35cm,yshift=0.2cm]al.north east) -- ([xshift=0.35cm,yshift=-0.2cm]al.south east);

\node [vertex] (bi1) [above right = 0.65cm and 2.8cm of ap] {$b_{r,1}$}
	edge [pre] node[midway,fill=white] {$1, -c_p[0]$} (ap);
\node (bim1) [below = -0.3cm of bi1] {$\vdots$};
\node [vertex] (biq) [below = -0.1cm of bim1] {$b_{r,q}$}
	edge [pre] node[midway,fill=white] {$1, -c_p[q]$} (ap);
\node (bim2) [below = -0.3cm of biq] {$\vdots$};
\node [vertex] (bid) [below = -0.1 cm of bim2] {$b_{r,d}$}
	edge [pre] node[midway,fill=white] {$1, -c_p[d]$} (ap);
\node (bi) [group, draw=blue, fit = (bi1) (bid),label={[label distance=0.3cm]120:$B_r$}] {};

\node (b1) [above = 0.5cm of bi1,label={[label distance=0.35cm]183:$B_{r-1}$}] {$\vdots$};
\draw[group,draw=blue] ([xshift=-0.35cm,yshift=-0.3cm]b1.north west) -- ([xshift=-0.35cm,yshift=-0.2cm]b1.south west) -- ([xshift=0.35cm,yshift=-0.2cm]b1.south east) -- ([xshift=0.35cm,yshift=-0.3cm]b1.north east);

\node (bl) [below = 0.3cm of bid,label={[label distance=0.3cm]175:$B_{r+1}$}] {$\vdots$};
\draw[group,draw=blue] ([xshift=-0.35cm]bl.south west) -- ([xshift=-0.35cm,yshift=0.01cm]bl.north west) -- ([xshift=0.35cm,yshift=0.01cm]bl.north east) -- ([xshift=0.35cm]bl.south east);
\draw[thick, decoration={brace, mirror, raise=0.5cm}, decorate] ([xshift=0.5cm,yshift=-0.35cm ]al.east) -- ([xshift=2.7cm,yshift=-0.35cm]al.east);
\node (unary) [below right = 1cm and 2.75cm of al,label={[label distance=0.3cm]170:unary term}]{};
\node [vertex] (b01) [right = 2.8 cm of bi1] {$b_{0,1}$}
	edge[pre,bend right=45] node {} (bi1)
	edge[pre,bend right=15] node {} (bi1);
\node (b0m1) [below = -0.3cm of b01] {$\vdots$};
\node [vertex] (b0q) [below = -0.1cm of b0m1] {$b_{0,q}$}
	edge[pre,bend right=17] node [midway,fill=white] {$1,2(g_r-1)\alpha$} (biq)
	edge[pre,bend left=17] node [midway,fill=white] {$1,0$} (biq);
\node (b0m2) [below = -0.3cm of b0q] {$\vdots$};
\node [vertex] (b0d) [below = -0.1cm of b0m2] {$b_{0,d}$}
	edge[pre,bend left=20] node {} (bid)
	edge[pre,bend left=50] node {} (bid);;
\node (b0) [group, draw=green, fit = (b01) (b0d),label={[label distance=0.01cm]90:$B_0$}] {};
\draw[thick, decoration={brace, mirror, raise=0.5cm}, decorate] ([xshift=4.0cm,yshift=-0.35cm ]al.east) -- ([xshift=6.3cm,yshift=-0.35cm]al.east);
\node (sibling) [below right = 0.9cm and 3.15cm of bl,label={[label distance=0.3cm]170:sibling penalty}]{};

\node [vertex,fill=blue!30,minimum size=0.8cm] (t) [right = 2.9cm of b0q] {$t$}
	edge[pre,bend right=50] node {} (b01)
	edge[pre,bend right=25] node {} (b01)
	edge[pre,bend right=17] node [midway,fill=white] {$1,2(n_c-1)\beta$} (b0q)
	edge[pre,bend left=17] node [midway,fill=white] {$1,0$} (b0q)
	edge[pre,bend left=30] node {} (b0d)
	edge[pre,bend left=55] node {} (b0d);
\node (out) [right=0.7cm of t,
	label={30:\texttt{Output flow}},
	label={330:$n_c k_s$}] {}
	edge[pre] node {} (t);
\draw[thick, decoration={brace, mirror, raise=0.5cm}, decorate] ([xshift=7.5cm,yshift=-0.35cm ]al.east) -- ([xshift=10.3cm,yshift=-0.35cm]al.east);
\node (sibling) [below right = 1.85cm and 3.0cm of b0,label={[label distance=0.3cm]170:orthogonality}]{};
	
\path (biq) edge [draw=none,bend left=5,midway] node {$\vdots$} (b0q);
\path (biq) edge [draw=none,bend left=50,midway] node {$\vdots$} (b0q);
\path (bi1) edge [draw=none,bend left=35,midway] node {$\vdots$} (b01);
\path (biq) edge [draw=none,bend right=45,midway] node {$\vdots$} (b0q);
\path (bid) edge [draw=none,bend right=30,midway] node {$\vdots$} (b0d);

\path (b0q) edge [draw=none,bend left=5,midway] node {$\vdots$} (t);
\path (b0q) edge [draw=none,bend left=45,midway] node {$\vdots$} (t);
\path (b0q) edge [draw=none,bend right=40,midway] node {$\vdots$} (t);
\path (b0q) edge [draw=none,bend left=100,midway] node {$\vdots$} (t);
\path (b0q) edge [draw=none,bend right=85,midway] node {$\vdots$} (t);

\end{tikzpicture}
}
\vspace{-1em}
\caption{Equivalent flow network diagram $G'$ corresponding to the discrete optimization \Cref{eqn:energy_tree_avg}. Edge labels show the capacity and the cost respectively.} 
\label{fig:mcf} \vspace{-1em}
\end{figure*}
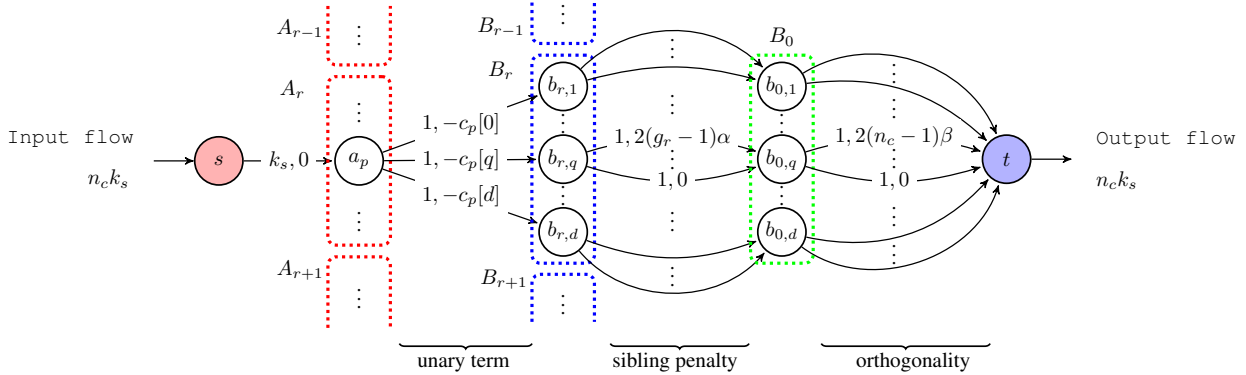

\begin{thm}\label{thm:equivalence}
The optimization problem in \Cref{eqn:energy_tree_avg} can be solved exactly in polynomial time by finding the minimum cost flow solution on the flow network G'.
\end{thm}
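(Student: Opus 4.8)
The plan is to build a cost-preserving correspondence between feasible assignments $\{\bfz_1,\ldots,\bfz_{n_c}\}$ for \Cref{eqn:energy_tree_avg} and integral $s$--$t$ flows of value $n_c k_s$ in $G'$ (taking $k_s=1$ when $v\neq k$), so that minimizing $\hat{g}$ becomes verbatim the minimum cost flow problem on $G'$. First I would rewrite the objective to expose a separable, discretely convex structure. Because the level $1,\ldots,v-1$ codes are fixed, the relation ``$\bfz_p^w=\bfz_q^w$ for all $w<v$'' is an equivalence relation that partitions the $n_c$ classes into groups $A_1,\ldots,A_R$ with $g_r=|A_r|$, and within $A_r$ every ordered pair lies in $\mathcal{S}_z^v$. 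Writing $m_{r,j}$ for the number of classes in $A_r$ activating bucket $j$, setting $n_j=\sum_r m_{r,j}$, and using that $Q=\diag(Q_{11},\ldots,Q_{dd})$ and $P=\diag(P_{11},\ldots,P_{dd})$ have nonnegative entries, the objective rewrites as
\small
\begin{equation*}
\hat{g}=\sum_{p,j}-\bfc_p^v[j]\,\bfz_p[j]+\sum_{r,j}Q_{jj}\,m_{r,j}(m_{r,j}-1)+\sum_{j}P_{jj}\,n_j(n_j-1),
\end{equation*}
\normalsize
a linear term plus penalties that are convex in the integer occupancy counts, their forward differences $2Q_{jj}m$ and $2P_{jj}n$ being nondecreasing.

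The correspondence is then straightforward to state and verify. Given a feasible $\{\bfz_p\}$, for every active bit $\bfz_p[j]=1$ route one unit along $s\to a_p\to b_{r,j}\to b_{0,j}\to t$, with $r$ the group of $p$, selecting the cheapest unused arc inside each parallel bundle. The capacity $k_s$ on $s\to a_p$ together with flow conservation forces exactly $k_s$ units through $a_p$, that is $\|\bfz_p\|_1=k_s$, while the unit capacities on the arcs $a_p\to b_{r,j}$ force $\bfz_p\in\{0,1\}^d$; conversely, any integral flow of value $n_c k_s$ must saturate every arc $s\to a_p$ (their capacities sum to $n_c k_s$) and, $G'$ being a layered DAG, decomposes into $n_c k_s$ unit paths of exactly this shape, from which an assignment is recovered. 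For the cost accounting: the arcs $a_p\to b_{r,j}$ carry cost $-\bfc_p^v[j]$ and reproduce the linear term; the $g_r$ unit-capacity arcs from $b_{r,j}$ to $b_{0,j}$ have costs $0,2Q_{jj},\ldots,2(g_r-1)Q_{jj}$, so sending $m_{r,j}$ units through that bundle incurs, at minimum, $\sum_{\ell=0}^{m_{r,j}-1}2\ell\,Q_{jj}=Q_{jj}m_{r,j}(m_{r,j}-1)$, the sibling penalty; likewise the $n_c$ arcs from $b_{0,j}$ to $t$ contribute $P_{jj}n_j(n_j-1)$, the orthogonality penalty. Hence the constructed flow has cost exactly $\hat{g}(\{\bfz_p\})$, an optimal integral flow attains $\min\hat{g}$, and its saturated unary arcs give the minimizer. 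Since $G'$ has $O(n_c d)$ nodes and arcs, any polynomial minimum cost flow algorithm then solves \Cref{eqn:energy_tree_avg} in polynomial time.

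The step I expect to need the most care -- the main obstacle -- is arguing that the parallel-arc gadget is \emph{tight}: that a minimum cost flow uses the cheaper arc of each bundle before the costlier ones, so its cost equals $Q_{jj}m_{r,j}(m_{r,j}-1)$ rather than merely upper-bounding it. This is precisely where the nonnegativity of $Q$ and $P$ enters: it makes the marginal costs $2Q_{jj}m$ and $2P_{jj}n$ nondecreasing, so within a bundle there is never an incentive to skip a cheaper arc and the minimum cost of pushing $m$ units is exactly the convex penalty (if negative entries were allowed the reduction would fail, consistent with the NP-hardness noted after \Cref{eqn:energy_tree}). A secondary point to handle cleanly is the group structure -- the sibling penalty must be realized by a \emph{separate} bundle $b_{r,j}$ per group $A_r$, not a single node per bucket, which is exactly why $G'$ replicates the $B$-layer across $A_1,\ldots,A_R$ -- and the fact that the extracted $\bfz_p$ are genuinely $\{0,1\}$-valued follows from total unimodularity of the flow constraint matrix, which guarantees an integral optimum.
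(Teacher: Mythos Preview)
Your proposal is correct and follows essentially the same approach as the paper: the same network $G'$, the same assignment--flow correspondence, and the same parallel-arc gadget realizing the quadratic penalties as $\sum_{i=0}^{m-1}2\alpha i=\alpha m(m-1)$. The only difference is presentational---you first rewrite $\hat g$ to expose the separable discretely-convex structure in the occupancy counts and then invoke the standard convex-cost reduction, whereas the paper defines the flow $f_z$ explicitly, checks conservation node by node, and computes the cost by direct summation in Lemmas~\ref{lemma:fo} and~\ref{lemma:fz}.
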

\vspace{-1em}
\begin{proof}
    Suppose we construct a vertex set $A = \{a_1,\ldots,a_{n_c}\}$ and partition $A$ into $\{A_r\}_{r=0}^l$ with the partition of $\{1,\ldots,n_c\}$ from equivalence relation $\mathcal{S}_{z}^v$\footnote{Define $(p,q) \in \mathcal{S}_z^v \iff a_p,a_q \in A_r, \forall r \geq 1$}. Here, we will define $A_0$ as a union of subsets of size $1$ (\ie~ each element in $A_0$ is a singleton without a sibling), and $A_1,\ldots,A_l$ as the rest of the subsets (of size greater than or equal to$2$). Concretely, $\left| A \right| = n_c$ and $A=\bigcup_{r=0}^l A_r$.

Then, we construct $l+1$ set of complete bipartite graphs $ \{G_r = \left( A_r \cup B_r, E_r \right) \}_{r=0}^l$ where we define $g_r\!=\!\left| A_r \right| $ and $\left| B_r \right| \!=\! d~~ \forall r$. Now suppose we construct a directed graph $G'$ by directing all edges $E_r$ from $A_r$ to $B_r$, attaching source $s$ to all vertices in $A_r$, and attaching sink $t$ to all vertices in $B_0$. Formally, $G' = \left(\bigcup_{r=0}^l \left(A_r \cup B_r\right) \cup \{s,t\}, E'\right)$. The edges in $E'$ inherit all directed edges from source to vertices in $A_r$, edges from vertices in $B_0$ to sink, and $\{E_r\}_{r=0}^l$. We also attach $g_r$ number of edges for each vertex $b_{r,q} \in B_r$ to $b_{0,q} \in B_0$ and attach $n_c$ number of edges from each vertex $b_{0,q} \in B_0$ to $t$. Concretely, $E'$ is 
\vspace{-0.5em}
\scriptsize
\begin{align}
    &\{(s,a_p)|a_p\in A\} \cup \bigcup_{r=0}^l E_r\cup \bigcup_{r=1}^l \{(b_{r,q}, b_{0,q})_i\}_{i=0}^{g_r-1} \cup \{(b_{0,q}, t)_j\}_{j=0}^{n_c-1}. \nonumber
\end{align}
\normalsize
\vspace{-1em}

Edges incident to $s$ have capacity $u(s,a_p) = k_s$ and cost $v(s, a_p)= 0$ for all $a_p \in A$. The edges between $a_p \in A_r$ and $b_{r,q} \in B_r$ have capacity $u(a_p, b_{r,q}) = 1$ and cost $v(a_p, b_{r,q}) = -\bfc_p[q]$. Each edge $i \in \{0, \ldots, g_r - 1\}$ between $b_{r,q} \in B_r$ and $b_{0,q} \in B_0$ has capacity $u\left( \left( b_{r,q}, b_{0,q} \right)_i \right) = 1$ and cost $u\left( \left( b_{r,q}, b_{0,q} \right)_i \right) = 2 \alpha i$. Each edge $j \in \{0, \ldots, n_c -1\}$ between $b_{0,q} \in B_0$ and $t$ has capacity $u\left( \left( b_{0,q}, t \right)_j \right) = 1$ and cost $v\left( \left( b_{0,q}, t \right)_j \right) = 2 \beta j$. Figure \ref{fig:mcf} illustrates the flow network $G'$. The amount of flow from source to sink is $n_c k_s$. The figure omits the vertices in $A_0$ and the corresponding edges to $B_0$ to avoid the clutter.\\

Now we define the flow $\{f_z(e)\}_{e \in E'}$ for each edge indexed both by flow configuration $\bfz_p \in \bfz_{1:n_c}$ where $\bfz_p \in \{0,1\}^d, \|\bfz_p\|_1 = k_s ~\forall p$ and $e \in E'$ below in \Cref{eqn:flow_def}.
\small
\begin{align}
&(i)~ f_z(s, a_p) = k_s,~ (ii)~ f_z(a_p, b_{r,q}) = \bfz_p[q]\nonumber\\
&(iii)~ f_z\left( \left( b_{r,q}, b_{0,q}\right)_i \right) = \begin{cases} 1 & ~\forall i < \sum_{p:a_p \in A_r} \mathbf{z}_p[q]\\ 0 & \text{otherwise}\end{cases} \nonumber\\
&(iv)~ f_z\left( \left(b_{0,q}, t\right)_j \right) = \begin{cases} 1 & ~\forall j < \sum_{p=1}^{n_c} \mathbf{z}_p[q]\\0 & \text{otherwise}\end{cases}
\label{eqn:flow_def}
\end{align}
\normalsize
To prove the equivalence of computing the minimum cost flow solution and finding the minimum binary assignment in \Cref{eqn:energy_tree_avg}, we need to show (1) that the flow defined in \Cref{eqn:flow_def} is feasible in $G'$ and (2) that the minimum cost flow solution of the network $G'$ and translating the computed flows to $\{\bfz_p\}$ in \Cref{eqn:energy_tree_avg} indeed minimizes the discrete optimization problem. We first proceed with the flow feasibility proof.\vspace{0.5em}

It is easy to see the capacity constraints are satisfied by construction in \Cref{eqn:flow_def} so we prove that the flow conservation conditions are met at each vertices. First, the output flow from the source $\sum_{a_p \in A} f_z(s,a_p) = \sum_{p=1}^{n_c} k_s = n_ck_s$ is equal to the input flow. For each vertex $a_p \in A$, the amount of input flow is $k_s$ and the output flow is the same $\sum_{b_{r,q}\in B_r} f_z(a_p,b_{r,q}) = \sum_{q=1}^d \mathbf{z}_p[q]=\| \mathbf{z}\|_1=k_s$.\\

For $r > 0$, for each vertex $b_{r,q} \in B_r$, denote the input flow as $y_{r,q} = \sum_{a_p \in A_r} f_z(a_p, b_{r,q}) = \sum_{p:a_p\in A_r} \mathbf{z}_p[q]$. The output flow is $\sum_{i=0}^{g_r-1} f_z((b_{r,q},b_{0,q})_i) = \sum_{p : a_p \in A_r} \bfz_p[q]= y_{r,q}$. The second term vanishes  because of \Cref{eqn:flow_def} (iii).  \vspace{0.5em}

The last flow conservation condition is to check the connections from each vertex $b_{0,q} \in B_0$ to the sink. Denote the input flow at the vertex as $y_{0,q} = \sum_{p:a_p\in A_0} \mathbf{z}_p[q] + \sum_{r=1}^l y_{r,q}= \sum_{p=1}^{n_c} \mathbf{z}_p[q]$. The output flow is $\sum_{j=0}^{n_c-1} f_z((b_{0,q}, t)_j) = \sum_{p=1}^{n_c} \bfz_p[q] = y_{0,q}$ which is identical to the input flow. Therefore, the flow construction in \Cref{eqn:flow_def} is feasible in $G'$.  \vspace{0.5em}

The second part of the proof is to check the optimality conditions and show the minimum cost flow finds the minimizer of \Cref{eqn:energy_tree_avg}. Denote, $\{f_o(e)\}_{e \in E'}$ as the minimum cost flow solution of the network $G'$ which minimizes the total cost $\sum_{e \in E'} v(e) f_o(e)$. Also denote the optimal flow from $a_p \in A_r$ to $b_{r,q} \in B_r,  f_o(a_p, b_q)$ as $\bfz'_p[q]$. By optimality of the flow, $\{f_o(e)\}_{e\in E'}$, $\sum_{e\in E'} v(e)f_o(e) \leq \sum_{e\in E'} v(e)f_z(e)~~ \forall z$. By \Cref{lemma:fo}, the \textit{lhs} of the inequality is equal to $\sum_{p=1}^{n_c} -{\mathbf{c}_p}^T\mathbf{z'}_p + \sum_{r=1}^l \sum_{p_1\neq p_2\in \{p|a_p\in A_r\}} \alpha {\mathbf{z'}_{p_1}}^T\mathbf{z'}_{p_2} + \sum_{p_1\neq p_2} \beta {\mathbf{z'}_{p_1}}^T\mathbf{z'}_{p_2}$. Additionally, \Cref{lemma:fz} shows the \textit{rhs} of the inequality is equal to $\sum_{p=1}^{n_c} -{\mathbf{c}_p}^T\mathbf{z}_p + \sum_{r=1}^l \sum_{p_1\neq p_2\in \{p|a_p\in A_r\}} \alpha {\mathbf{z}_{p_1}}^T\mathbf{z}_{p_2} + \sum_{p_1\neq p_2} \beta {\mathbf{z}_{p_1}}^T\mathbf{z}_{p_2}$.\vspace{0.5em} Finally, $\forall \{\bfz\}$
\scriptsize
\begin{align}
    &\sum_{p=1}^{n_c} -{\mathbf{c}_p}^T\mathbf{z'}_p + \sum_{r=1}^l \sum_{p_1\neq p_2\in \{p|a_p\in A_r\}} \alpha {\mathbf{z'}_{p_1}}^T\mathbf{z'}_{p_2} +\sum_{p_1\neq p_2} \beta {\mathbf{z'}_{p_1}}^T\mathbf{z'}_{p_2}\nonumber\\
    &\leq \sum_{p=1}^{n_c} -{\mathbf{c}_p}^T\mathbf{z}_p + \sum_{r=1}^l \sum_{p_1\neq p_2\in \{p|a_p\in A_r\}} \alpha {\mathbf{z}_{p_1}}^T\mathbf{z}_{p_2} + \sum_{p_1\neq p_2} \beta {\mathbf{z}_{p_1}}^T\mathbf{z}_{p_2}\nonumber.
\end{align}
\normalsize
This shows computing the minimum cost flow solution on $G'$ and converting the flows to $\bfz$'s, we can find the minimizer of the objective in \Cref{eqn:energy_tree_avg}.
\end{proof} \vspace{-1em}
\begin{lemma}\label{lemma:fo}
Given the minimum cost flow $\{f_o(e)\}_{e \in E'}$ of the network $G'$, the total cost of the flow is $\sum_{e\in E'} v(e)f_o(e) = \sum_{p=1}^{n_c} -{\mathbf{c}_p}^T\mathbf{z'}_p + \sum_{r=1}^l \sum_{p_1\neq p_2\in \{p|a_p\in A_r\}} \alpha {\mathbf{z'}_{p_1}}^T\mathbf{z'}_{p_2} + \sum_{p_1\neq p_2} \beta {\mathbf{z'}_{p_1}}^T\mathbf{z'}_{p_2}$.
\end{lemma}
\vspace{-1em}
\begin{proof}
    Proof in section A.2 of the supplementary material.
\end{proof}
\vspace{-1em}
\begin{lemma}\label{lemma:fz}
Given a feasible flow $\{f_z(e)\}_{e \in E'}$ of the network $G'$, the total cost of the flow is $\sum_{e\in E'} v(e)f_z(e) = \sum_{p=1}^{n_c} -{\mathbf{c}_p}^T\mathbf{z}_p + \sum_{r=1}^l \sum_{p_1\neq p_2\in \{p|a_p\in A_r\}} \alpha {\mathbf{z}_{p_1}}^T\mathbf{z}_{p_2} + \sum_{p_1\neq p_2} \beta {\mathbf{z}_{p_1}}^T\mathbf{z}_{p_2}$.
\end{lemma}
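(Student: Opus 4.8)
The plan is to evaluate the total cost $\sum_{e \in E'} v(e) f_z(e)$ by partitioning $E'$ into the four groups of edges that arise in the construction of $G'$ and summing the contribution of each group using the explicit flow values in \Cref{eqn:flow_def}. The first group, the source edges $\{(s,a_p)\}$, has $v(e)=0$ and contributes nothing. For the second group, the unary edges $(a_p, b_{r,q})$ running from each $A_r$ into $B_r$, we have $f_z(a_p, b_{r,q}) = \bfz_p[q]$ at cost $-\bfc_p[q]$; since $A = \bigcup_{r=0}^l A_r$ partitions $\{1,\ldots,n_c\}$, summing over all $r$, over $p$ with $a_p \in A_r$, and over $q$ collapses to $\sum_{p=1}^{n_c} -\bfc_p^\intercal \bfz_p$, which is exactly the unary term.

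For the third group, the parallel sibling edges $\{(b_{r,q}, b_{0,q})_i\}_{i=0}^{g_r-1}$ with $r \geq 1$, I would fix $r$ and $q$ and write $y_{r,q} = \sum_{p:a_p \in A_r} \bfz_p[q]$ for the flow entering $b_{r,q}$ — the same quantity already computed in the feasibility argument of \Cref{thm:equivalence}. By \Cref{eqn:flow_def} exactly the edges indexed $i = 0, \ldots, y_{r,q}-1$ carry unit flow, so this pair $(r,q)$ contributes $\sum_{i=0}^{y_{r,q}-1} 2\alpha i = \alpha\, y_{r,q}(y_{r,q}-1)$. Because every $\bfz_p[q] \in \{0,1\}$ we have $y_{r,q} = \sum_{p:a_p\in A_r}\bfz_p[q]^2$ and $y_{r,q}^2 = \sum_{p_1,p_2:a_{p}\in A_r}\bfz_{p_1}[q]\bfz_{p_2}[q]$, hence $y_{r,q}(y_{r,q}-1) = \sum_{p_1 \neq p_2 \in \{p\mid a_p \in A_r\}} \bfz_{p_1}[q]\bfz_{p_2}[q]$. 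Summing over $q$ and then over $r=1,\ldots,l$ gives $\sum_{r=1}^l \alpha \sum_{p_1\neq p_2\in\{p\mid a_p\in A_r\}} \bfz_{p_1}^\intercal \bfz_{p_2}$, the sibling penalty term. The fourth group, the parallel edges $\{(b_{0,q}, t)_j\}_{j=0}^{n_c-1}$, is handled by the identical calculation with $y_{0,q} = \sum_{p=1}^{n_c}\bfz_p[q]$ and cost $2\beta j$, yielding $\beta \sum_q y_{0,q}(y_{0,q}-1) = \beta \sum_{p_1\neq p_2} \bfz_{p_1}^\intercal \bfz_{p_2}$, the orthogonality term. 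Adding the four contributions gives the stated identity.

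The computation is essentially bookkeeping, so I do not expect a real obstacle; the two points needing care are (i) the combinatorial identity $y(y-1) = \sum_{p_1\neq p_2}\bfz_{p_1}[q]\bfz_{p_2}[q]$, which holds only because the $\bfz_p[q]$ are $0/1$-valued, so that the triangular sum of the staircase costs $0, 2\alpha, 4\alpha, \ldots$ equals $\alpha$ times the number of ordered sibling pairs both active at coordinate $q$; and (ii) keeping the index sets straight — the unary sum ranges over all $n_c$ classes, whereas the sibling sum ranges only over ordered pairs within a common block $A_r$ with $r \geq 1$, the singleton blocks of $A_0$ contributing no pair $p_1 \neq p_2$ and therefore being correctly absent from the formula.
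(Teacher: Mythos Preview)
Your proposal is correct and follows essentially the same route as the paper's proof: both decompose $E'$ into the source, unary, sibling, and sink edge groups, evaluate each using the flow values in \Cref{eqn:flow_def}, reduce the staircase sums $\sum_{i<y}2\alpha i$ and $\sum_{j<y}2\beta j$ to $\alpha y(y-1)$ and $\beta y(y-1)$, and then invoke the $0/1$ identity $y(y-1)=\sum_{p_1\neq p_2}\bfz_{p_1}[q]\bfz_{p_2}[q]$ to recover the quadratic terms. Your explicit remarks on why the singleton blocks in $A_0$ drop out and why the combinatorial identity needs $\bfz_p[q]\in\{0,1\}$ are exactly the two bookkeeping points the paper's computation relies on implicitly.
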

\vspace{-1em}
\begin{proof}
    Proof in section A.2 of the supplementary material.
\end{proof}
\vspace{-2em}
\subsection{Learning the embedding representation given the hierarchical hash codes} \vspace{-0.5em}
\label{sec:method_embed}
Given a set of binary hash codes for the mean embeddings $\{\bfz_1^v, \ldots, \bfz_{n_c}^v\}, ~\forall v=1, \ldots, k$ computed from \Cref{eqn:energy_tree_avg}, we can derive the hash codes for all $n$ examples in the minibatch, $\bfh_i^v := \bfz_p^v ~~\forall i : y_i = p$ and update the network weights $\bftheta$ given the hierarchical hash codes in turn. The task is to update the embedding representations, $\{f(\bfx_i; \bftheta)^v\}_{i=1}^n, ~\forall v=1, \ldots, k$, so that similar pairs of data have similar embedding representations indexed at the activated hash code dimensions and vice versa. Note, In terms of the hash code optimization in \Cref{eqn:energy_tree_avg} and the bound in \Cref{eqn:bound}, this embedding update has the effect of tightening the bound gap $M(\bftheta)$.

We employ the state of the art deep metric learning algorithms (denote as $\ell_{\text{metric}}(\cdot)$) such as \emph{triplet loss with semi-hard negative mining} \cite{facenet} and \emph{npairs loss} \cite{npairs} for this subproblem where the distance between two examples $\bfx_i$ and $\bfx_j$ at hierarchy level $v$ is defined as $d_{ij}^v = \| \left(\bfh_i^v \lor \bfh_j^v \right) \odot \left( f(\bfx_i; \bftheta)^v - f(\bfx_j; \bftheta)^v \right) \|_1$. Utilizing the logical \emph{OR} of the two binary masks, in contrast to independently indexing the representation with respective masks, to index the embedding representations helps prevent the pairwise distances frequently becoming zero due to the sparsity of the code. Note, this formulation in turn accommodates the backpropagation gradients to flow more easily. In our embedding representation learning subproblem, we need to learn the representations which respect the tree structural constraint on the corresponding hash code $\bfh = [\bfh^1, \ldots, \bfh^k] \in \{0,1\}^{d\times k}$ where $\| \bfh^v \|_1 = 1 ~~\forall v \neq k$ and $\| \bfh^k \|_1 = k_s$. To this end, we decompose the problem and compute the embedding loss per each hierarchy level $v$ separately. 

Furthermore, naively using the similarity labels to define similar pairs versus dissimilar pairs during the embedding learning subproblem could create a discrepancy between the hash code discrete optimization subproblem and the embedding learning subproblem leading to contradicting updates. Suppose two examples $\bfx_i$ and $\bfx_j$ are dissimilar and both had the highest activation at the same dimension $o$ and the hash code for some level $v$ was identical \ie~ $\bfh_i^v[o] = \bfh_j^v[o] = 1$. Enforcing the metric learning loss with the class labels, in this case, would lead to increasing the highest activation for one example and decreasing the highest activation for the other example. This can be problematic for the example with decreased activation because it might get hashed to another occupied bucket after the gradient update and this can repeat causing instability in the optimization process. 

However, if we relabel the two examples so that they are treated as the same class as long as they have the same hash code at the level, the update wouldn't decrease the activations for any example, and the sibling term (the second term) in \Cref{eqn:energy_tree_avg} would automatically take care of splitting the two examples in the next subsequent levels. 

To this extent, we apply \emph{label remapping} as follows. $y_i^v = remap(\bfh_i^v)$, where $remap(\cdot)$ assigns arbitrary unique labels to each unique configuration of $\bfh_i^v$. Concretely, $remap(\bfh_i^v) = remap(\bfh_j^v) \iff y_i^v = y_j^v$. Finally, the embedding representation learning subproblem aims to solve \Cref{eqn:embedding} given the hash codes and the remapped labels. Section C in the supplementary material includes the ablation study of label remapping.
\vspace{-0.5em}

\small
\begin{align}
\minimize_{\bftheta} \sum_{v=1}^k \ell_{\text{metric}} \left( \{f(x_i; \bftheta)^v\}_{i=1}^n; ~\{\bfh_i^v\}_{i=1}^n, \{y_i^v\}_{i=1}^n \right)
\label{eqn:embedding}
\end{align}
\normalsize
Following the protocol in \cite{jeong2018}, we use the Tensorflow implementation of deep metric learning algorithms in \href{https://www.tensorflow.org/versions/master/api_docs/python/tf/contrib/losses/metric_learning}{\texttt{tf.contrib.losses.metric\_learning}}.\vspace{-0.5em}

\section{Implementation details} \vspace{-1em}
\begin{algorithm}[H]
   \caption{Learning algorithm}
   \label{alg:procedure}
\begin{algorithmic}
                  \INPUT $\bftheta_b^{\text{emb}}$ (pretrained metric learning base model); $\bftheta_d$, $k$
                  \REQUIRE $\bftheta_f = [\bftheta_b, \bftheta_d]$
                  \FOR{ $t=1,\ldots,$ MAXITER}
                    \STATE Sample a minibatch $\{\bfx_i\}$ and initialize $\mathcal{S}^1_{z}=\emptyset$
                    \FOR{$v=1,\cdots, k$}
                      \STATE Update the flow network $G'$ by computing class cost vectors\\
                      $~~~~\bfc_p^v = \frac{1}{m} \sum_{i:y_i = p} f(\bfx_i; \bftheta_f)^v$
                      \STATE Compute the hash codes $\{\bfh^{v}_i\}$ via minimum cost flow on $G'$
                      \STATE Update $\mathcal{S}^{v+1}_{z}$ given $\mathcal{S}^v_{z}$ and $\{h_i^v\}$
                      \STATE Remap the label to compute $y^v$
                    \ENDFOR
                    \STATE Update the network parameter given the hash codes
                        \vspace{-0.5em}
                        $$\bftheta_f \leftarrow \bftheta_f - \eta^{(t)} \partial_{\bftheta_f} \sum_{v=1}^k \ell_\text{metric}(\bftheta_f; ~\bfh_{1:n_c}^v, y_{1:n_c}^v) $$ \vspace{-1em}
                    \STATE Update stepsize $\eta^{(t)} \leftarrow$ ADAM rule \cite{adam}
                   \ENDFOR
               \OUTPUT $\bftheta_{f}$ (final estimate);
\end{algorithmic}
\end{algorithm}

\label{sec:implementation}
\textbf{Network architecture}~ For fair comparison, we follow the protocol in \cite{jeong2018} and use the NIN \cite{NIN} architecture (denote the parameters $\bftheta_b$) with \emph{leaky relu} \cite{xu2015empirical} with $\tau=5.5$ as activation function and train Triplet embedding network with semi-hard negative mining \cite{facenet}, Npairs network \cite{npairs} from scratch as the base model, and snapshot the network weights ($\bftheta_b^{\text{emb}}$) of the learned base model. Then we replace the last layer in ($\bftheta_b^{\text{emb}}$) with a randomly initialized $dk$ dimensional fully connected projection layer ($\bftheta_d$) and finetune the hash network (denote the parameters as $\bftheta_f = [\bftheta_b, \bftheta_d]$). \Cref{alg:procedure} summarizes the learning procedure.\vspace{-0.2em}

\textbf{Hash table construction and query}~ We use the learned hash network $\bftheta_f$ and apply \Cref{eqn:hash_function} to convert $\bfx_i$ into the hash code $\bfh(\bfx_i; \bftheta_f)$ and use the base embedding network $\bftheta_b^{\text{emb}}$ to convert the data into the embedding representation $f(\bfx_i; \bftheta_b^{\text{emb}})$. Then, the embedding representation is hashed to buckets corresponding to the $k_s$ set bits in the hash code. During inference, we convert a query data $\bfx_q$ into the hash code $\bfh(\bfx_q; \bftheta_f)$ and into the embedding representation $f(\bfx_q; \bftheta_b^{\text{emb}})$.  Once we retrieve the union of all bucket items indexed at the $k_s$ set bits in the hash code, we apply a reranking procedure \cite{survey_learningtohash} based on the euclidean distance in the embedding space.\vspace{-0.2em} 

\textbf{Evaluation metrics}~ Following the evaluation protocol in \cite{jeong2018}, we report our accuracy results using precision@k (Pr@k) and normalized mutual information (NMI) \cite{manningbook} metrics. Precision@k is computed based on the reranked ordering (described above) of the retrieved items from the hash table. We evaluate NMI, when the code sparsity is set to $k_s=1$, treating each bucket as an individual cluster. We report the speedup results by comparing the number of retrieved items versus the total number of data (exhaustive linear search) and denote this metric as SUF. \vspace{-0.7em}

\section{Experiments}
\label{sec:exp}
\begin{table*}[ht]
\centering
\fontsize{6pt}{6.5pt}\selectfont
\begin{tabular}{cc cccccccc cccccccc}
    \addlinespace[-\aboverulesep]
    \cmidrule[1pt](r){1-10} \cmidrule[1pt](r){11-18}
    &      &  \multicolumn{8}{c}{Triplet}                                                                                              &\multicolumn{8}{c}{Npairs} \\
    &      &  \multicolumn{4}{c}{\emph{test}}                                & \multicolumn{4}{c}{\emph{train}}                        &\multicolumn{4}{c}{\emph{test}}                                & \multicolumn{4}{c}{\emph{train}} \\
    \cmidrule(r){1-6} \cmidrule(r){7-10} \cmidrule(r){11-14} \cmidrule(r){15-18}
    &Method& SUF            & Pr@1           & Pr@4          & Pr@16         & SUF            & Pr@1           & Pr@4          & Pr@16 &SUF            & Pr@1           & Pr@4          & Pr@16         & SUF            & Pr@1           & Pr@4          & Pr@16 \\
    \cmidrule(r){1-6} \cmidrule(r){7-10} \cmidrule(r){11-14} \cmidrule(r){15-18}
    $k_s$
    &Metric& 1.00          &  56.78         & 55.99         & 53.95         & 1.00           &  62.64         & 61.91         & 61.22  &1.00            & 57.05          & 55.70         & 53.91         & 1.00           &61.78           & 60.63         & 59.73 \\
    \cmidrule(r){1-6} \cmidrule(r){7-10} \cmidrule(r){11-14} \cmidrule(r){15-18}
    $1$
    &LSH    & \textbf{138.83}&52.52 & 48.67&39.71&\textbf{135.64} & 60.45 &58.10&54.00 & 29.74&53.55&50.75&43.03&30.75&59.87&58.34&55.35\\
    &DCH    & 96.13         &  56.26         & 55.65         & 54.26         & 89.60          &  61.06         & 60.80         & 60.81 &41.59          &  57.23         & 56.25         & 54.45         & 40.49          & 61.59          & 60.77         & 60.12 \\
    &Th     & 41.21         &  54.82         & 52.88         & 48.03         & 43.19          &  61.56         & 60.24         & 58.23 &12.72          &  54.95         & 52.60         & 47.16         & 13.65          & 60.80          & 59.49         & 57.27 \\  
    &VQ     & 22.78         &  56.74         & 55.94         & 53.77         & 40.35          & 62.54          & 61.78         & 60.98 &34.86          & 56.76          & 55.35         & 53.75         & 31.35          &  61.22         & 60.24         & 59.34 \\ 
    &\cite{jeong2018}    & 97.67&  57.63         & 57.16         & 55.76         & 97.77 &  63.85         & 63.40 & 63.39&54.85          & 58.19 & 57.22         & 55.87         & 54.90          &  \textbf{63.11}& 62.29         & 61.94 \\
    &Ours   & 97.67&  \textbf{58.42}& \textbf{57.88}& \textbf{56.58}& 97.28          &  \textbf{64.73}& \textbf{64.63}& \textbf{64.69}&\textbf{101.1}& \textbf{58.28}  & \textbf{57.79}& \textbf{56.92}& \textbf{97.47} &  63.06         & \textbf{62.62}& \textbf{62.44} \\
    \cmidrule(r){1-6} \cmidrule(r){7-10} \cmidrule(r){11-14} \cmidrule(r){15-18}
    $2$
    &Th     & 14.82         &  56.55         & 55.62         & 52.90         & 15.34          &  62.41         & 61.68         & 60.89&5.09           &  56.52         & 55.28         & 53.04         & 5.36           &  61.65         & 60.50         & 59.50  \\ 
    &VQ     & 5.63          &  56.78         & 56.00         & 53.99         & 6.94           &  62.66         & 61.92         & 61.26&6.08           &  57.13         & 55.74         & 53.90         & 5.44           &  61.82         & 60.56         & 59.70  \\ 
    &\cite{jeong2018}   & 76.12         &  57.30         & 56.70         & 55.19         & 78.28          & 63.60      & 63.19 & 63.09&16.20          &  57.27         & 55.98         & 54.42         & 16.51          &  61.98         & 60.93         & 60.15 \\
    &Ours   & \textbf{98.38}&  \textbf{58.39}& \textbf{57.51}& \textbf{56.09}& \textbf{97.20} &  \textbf{64.35}& \textbf{63.91}& \textbf{63.81}& \textbf{69.48} &  \textbf{57.60}& \textbf{56.98}& \textbf{55.82}& \textbf{69.91} &  \textbf{62.19}& \textbf{61.71}& \textbf{61.27} \\
    \cmidrule(r){1-6} \cmidrule(r){7-10} \cmidrule(r){11-14} \cmidrule(r){15-18}
    $3$
    &Th     & 7.84          &  56.78         & 55.91         & 53.64         & 8.04           &  62.66         & 61.88         & 61.16& 3.10           &  56.97         & 55.56         & 53.76         & 3.21           &  61.75         & 60.66         & 59.73 \\
    &VQ     & 2.83          &  56.78         & 55.99         & 53.95         & 2.96           &  62.62         & 61.92         & 61.22& 2.66           &  57.01         & 55.69         & 53.90         & 2.36           &  61.78         & 60.62         & 59.73 \\ 
    &\cite{jeong2018}   & 42.12         &  56.97         & 56.25         & 54.40         & 44.36          &  62.87     & 62.22 & 61.84& 7.25           &  57.15         & 55.81         & 54.10         & 7.32           &  61.90         & 60.80         & 59.96 \\ 
    &Ours   & \textbf{94.55}& \textbf{58.19} & \textbf{57.42}& \textbf{56.02}& \textbf{93.69}&\textbf{63.60}&\textbf{63.35}&\textbf{63.32}& \textbf{57.09}& \textbf{57.56}& \textbf{56.70}& \textbf{55.41}& \textbf{58.62} &  \textbf{62.30}& \textbf{61.44}& \textbf{60.91} \\
    \cmidrule(r){1-6} \cmidrule(r){7-10} \cmidrule(r){11-14} \cmidrule(r){15-18}
    $4$
    &Th     & 4.90          & 56.84          & 56.01         & 53.86         & 5.00           &  62.66         & 61.94         & 61.24 & 2.25           &  57.02         & 55.64         & 53.88         & 2.30           & 61.78          & 60.66         & 59.75 \\ 
    &VQ     & 1.91          & 56.77          & 55.99         & 53.94         & 1.97           & 62.62          & 61.91         & 61.22 & 1.66           & 57.03          & 55.70         & 53.91         & 1.55           &  61.78         & 60.62         & 59.73 \\  
    &\cite{jeong2018}    & 16.19         &  57.11         & 56.21         & 54.20         & 16.52          &  62.81         & 62.14         & 61.58 & 4.51           & 57.15          & 55.77         & 54.01         & 4.52           & 61.81          & 60.69         & 59.77 \\ 
    &Ours   & \textbf{92.18}&  \textbf{58.52}& \textbf{57.79}& \textbf{56.22}& \textbf{91.27} &  \textbf{64.20}& \textbf{63.95}& \textbf{63.63} & \textbf{49.43} & \textbf{57.75} & \textbf{56.79}& \textbf{55.50}& \textbf{50.80} & \textbf{62.43} & \textbf{61.65}& \textbf{61.01} \\
    \addlinespace[-\belowrulesep]
    \cmidrule[1pt](r){1-6} \cmidrule[1pt](r){7-10} \cmidrule[1pt](r){11-14} \cmidrule[1pt](r){15-18}
\end{tabular}
\caption{Results with Triplet network with hard negative mining and Npairs network. Querying test data against a hash table built on \emph{test} set and a hash table built on \emph{train} set on Cifar-100.}
\label{tab:cifar}
\end{table*}

\begin{table}[ht]
\setlength{\tabcolsep}{4pt}
\centering
\fontsize{6pt}{6.5pt}\selectfont
\begin{tabular}{cccccccccc}
\addlinespace[-\aboverulesep]
\cmidrule[1pt](r){1-6} \cmidrule[1pt](r){7-10}
&      &  \multicolumn{4}{c}{Triplet}   &\multicolumn{4}{c}{Npairs} \\
\cmidrule(r){1-6} \cmidrule(r){7-10}
&Method & SUF         &  Pr@1        &      Pr@4    &        Pr@16 & SUF           &  Pr@1        &      Pr@4    &        Pr@16 \\ 
\cmidrule(r){1-6} \cmidrule(r){7-10}
$k_s$
&Metric& 1.00       &   10.90 & 9.39  &  7.45 & 1.00          &   15.73      &    13.75     &         11.08\\
\cmidrule(r){1-6} \cmidrule(r){7-10}
$1$
& LSH &           164.25&  8.86&  7.23&  5.04&     112.31      &         11.71&         8.98&          5.56\\
& DCH &          140.77&  9.82&  8.43&  6.44&      220.52&       13.87&              11.77&          8.99\\
& Th &           18.81&  10.20&  8.58&  6.50&     1.74      &         15.06&         12.92&          9.92\\
& VQ &          146.26&  10.37& 8.84& 6.90&    451.42     &         15.20&      	 13.27&         10.96\\
& \cite{jeong2018}&         221.49&    \textbf{11.00}& \textbf{9.59}& 7.83&    478.46     &16.95&         15.27&         13.06\\ 
& Ours&\textbf{590.41}&         10.91&      9.58& \textbf{7.85} &\textbf{952.49}&   \textbf{17.00}&\textbf{15.53}&\textbf{13.54}\\
\cmidrule(r){1-6} \cmidrule(r){7-10}
$2$
&Th  &           6.33&      10.82&            9.30&         7.32&            1.18&       15.70&          13.69&              10.96\\
&VQ  &          32.83&      10.88&            9.33&         7.39&          116.26&       15.62&          13.68&              11.15\\
&\cite{jeong2018}&          60.25&      11.10&            9.64&         7.73&          116.61&       16.40&          14.49&             12.00\\ 
&Ours&\textbf{533.86}&\textbf{11.14}&\textbf{9.72}&\textbf{7.96}&\textbf{1174.35}&  \textbf{17.22}&\textbf{15.57}&  \textbf{13.63}\\
\cmidrule(r){1-6} \cmidrule(r){7-10}
$3$
&Th&                 3.64&          10.87&          9.38&   7.42&            1.07&         15.73&         13.74&        11.07\\
&VQ&                13.85&          10.90&          9.38&   7.44&           55.80&         15.74&         13.74&        11.12\\
&\cite{jeong2018}&              27.16& 11.20&           9.55&   7.60&           53.98&        16.24&         14.32&        11.73\\ 
&Ours&    \textbf{477.86}&          \textbf{11.21}&\textbf{9.72}&\textbf{7.94}&\textbf{1297.98}&\textbf{17.09}&\textbf{15.37}&\textbf{13.39}\\
\addlinespace[-\belowrulesep]
\cmidrule[1pt](r){1-6} \cmidrule[1pt](r){7-10}
\end{tabular}
\caption{Results with Triplet network with hard negative mining and Npairs \cite{npairs}  Network. Querying ImageNet \emph{val} data against hash table built on \emph{val} set.}
\label{tab:imagenet} 
\end{table}
We report our results on Cifar-100 \cite{cifar100} and ImageNet \cite{imagenet} datasets and compare against several baseline methods. First baseline methods are the state of the art deep metric learning models \cite{facenet, npairs} performing an exhaustive linear search over the whole dataset given a query data (denote as `Metric'). Next baseline is the Binarization transform \cite{agrawal2014,zhai2017} where the dimensions of the hash code corresponding to the top $k_s$ dimensions of the embedding representation are set (denote as `Th'). Then we perform vector quantization \cite{survey_learningtohash} on the learned embedding representation from the deep metric learning methods above on the entire dataset and compute the hash code based on the indices of the $k_s$ nearest centroids (denote as `VQ'). Another baseline is the quantizable representation in \cite{jeong2018}(denote as \cite{jeong2018}). In both Cfar-100 and ImageNet, we follow the data augmentation and preprocessing steps in \cite{jeong2018} and train the metric learning base model with the same settings in \cite{jeong2018} for fair comparison. In Cifar-100 experiment, we set $(d,k)=(32,2)$ and $(d,k)=(128,2)$ for the npairs network and the triplet network, respectively. In ImageNet experiment, we set $(d,k)=(512,2)$ and $(d,k)=(256,2)$ for the npairs network and the triplet network, respectively. In ImageNetSplit experiment, we set $(d,k)=(64,2)$. We also perform LSH hashing \cite{jain2008fast} baseline and Deep Cauchy Hashing \cite{dch} baseline which both generate $n$-bit binary hash codes with $2^n$ buckets and compare against other methods when $k_s\!=\!1$ (denote as `LSH' and `DCH', respectively). For the fair comparison, we set the number of buckets, $2^n\!=\!dk$. 

\subsection{Cifar-100}
Cifar-100 \cite{cifar100} dataset has $100$ classes. Each class has $500$ images for \emph{train} and $100$ images for \emph{test}.  Given a query image from \emph{test}, we experiment the search performance both when the hash table is constructed from \emph{train} and from \emph{test}.  The batch size is set to $128$ in Cifar-100 experiment. We finetune the base model for $70$k iterations and decayed the learning rate to $0.3$ of previous learning rate after $20$k iterations when we optimize our methods.  \Cref{tab:cifar} shows the results from the triplet network and the npairs network respectively.  The results show that our method not only outperforms search accuracies of the state of the art deep metric learning base models but also provides the superior speedup over other baselines. \vspace{-0.7em}

\begin{table}[ht]
\centering
\fontsize{6pt}{6.5pt}\selectfont
\begin{tabular}{c ccc ccc}
\cmidrule[1pt](r){1-4} \cmidrule[1pt](r){5-7}
&  \multicolumn{3}{c}{Triplet}   &\multicolumn{3}{c}{Npairs} \\
\cmidrule(r){1-4} \cmidrule(r){5-7}
& \multicolumn{2}{c}{Cifar-100} & ImageNet &\multicolumn{2}{c}{Cifar-100} & ImageNet \\ 
& train         & test          & val      &train         & test          & val      \\ 
\cmidrule(r){1-4} \cmidrule(r){5-7}
LSH   & 62.94         & 53.11         &  37.90   & 43.80         & 37.45             &  36.00 \\
DCH   & 86.11         & 68.88         & 45.55   & 80.74         & 65.62            & 50.01  \\ 
Th   & 68.20         & 54.95         & 31.62    & 51.46         & 44.32             & 15.20    \\
VQ   & 76.85         & 62.68         & 45.47    & 80.25         & 66.69             & 53.74    \\
\cite{jeong2018} & 89.11         & 68.95         & 48.52     & 84.90& 68.56             & 55.09    \\
Ours & \textbf{89.95}   & \textbf{69.64}& \textbf{61.21} & \textbf{86.80}         & \textbf{71.30}    & \textbf{65.49}   \\ 
\cmidrule[1pt](r){1-4} \cmidrule[1pt](r){5-7}
\end{tabular}
\caption{Hash table NMI for Cifar-100 and Imagenet.}
\label{tab:NMI} 
\end{table}

\begin{table}[ht]
\centering
\fontsize{6pt}{6.5pt}\selectfont
\begin{tabular}{cccccccccc}
\addlinespace[-\aboverulesep]
\toprule
&Method & SUF         &  Pr@1        &      Pr@4    &        Pr@16 \\
\midrule
$k_s$
&Metric & 1.00        &   21.55      &      19.11    &       16.06 \\
\midrule
$1$
&LSH    & 33.75       &  18.49       &       15.50    & 11.14          \\
& Th    & 10.98       &  20.25       &       17.22    &  13.66        \\
& VQ-train& 54.30     &  20.15       &       18.10    &  14.85        \\
& VQ-test& 57.44       &  20.59       &      18.31    &  15.32        \\
& \cite{jeong2018}& 56.35&  21.35       & 18.49    &  15.32    \\
& Ours    & \textbf{78.23}&  \textbf{21.46} & \textbf{18.88}    &  \textbf{15.67} \\
\midrule
$2$
& Th    & 4.55       &  21.27       &   18.86    &  15.68        \\
& VQ-train & 15.29       &  21.51 &  19.03    &  15.88        \\
& VQ-test & 16.43       &  21.58  &  18.93    &  15.94        \\
& \cite{jeong2018}    & 15.99       &  \textbf{22.12}       & \textbf{19.21}   &  \textbf{15.95}        \\
& Ours    & \textbf{71.14}       &  \textbf{22.12}       & 18.63    &  15.34        \\
\midrule
$3$
& Th    & 2.79       &  21.53       & 19.11    &  15.99      \\
& VQ-train & 7.80      &  21.56      &      19.11    &  16.03        \\
& VQ-test  & 8.20       &  21.58       &    19.09    &  16.06        \\
& \cite{jeong2018}    & 7.24      &  \textbf{22.18}       &       \textbf{19.40}    &  \textbf{16.10} \\
& Ours    & \textbf{84.04} &21.97       & 18.87    &  15.56        \\
\addlinespace[-\belowrulesep]
\bottomrule
\end{tabular}
\caption{Results with Triplet network with hard negative mining. Querying ImageNet \emph{val} set in $C_{\text{test}}$ against hash table built on \emph{val} set in $C_{\text{test}}$.}
\label{tab:imagenetsplit} 
\end{table}

\begin{figure*}[ht]
\centering
\includegraphics[width=\linewidth]{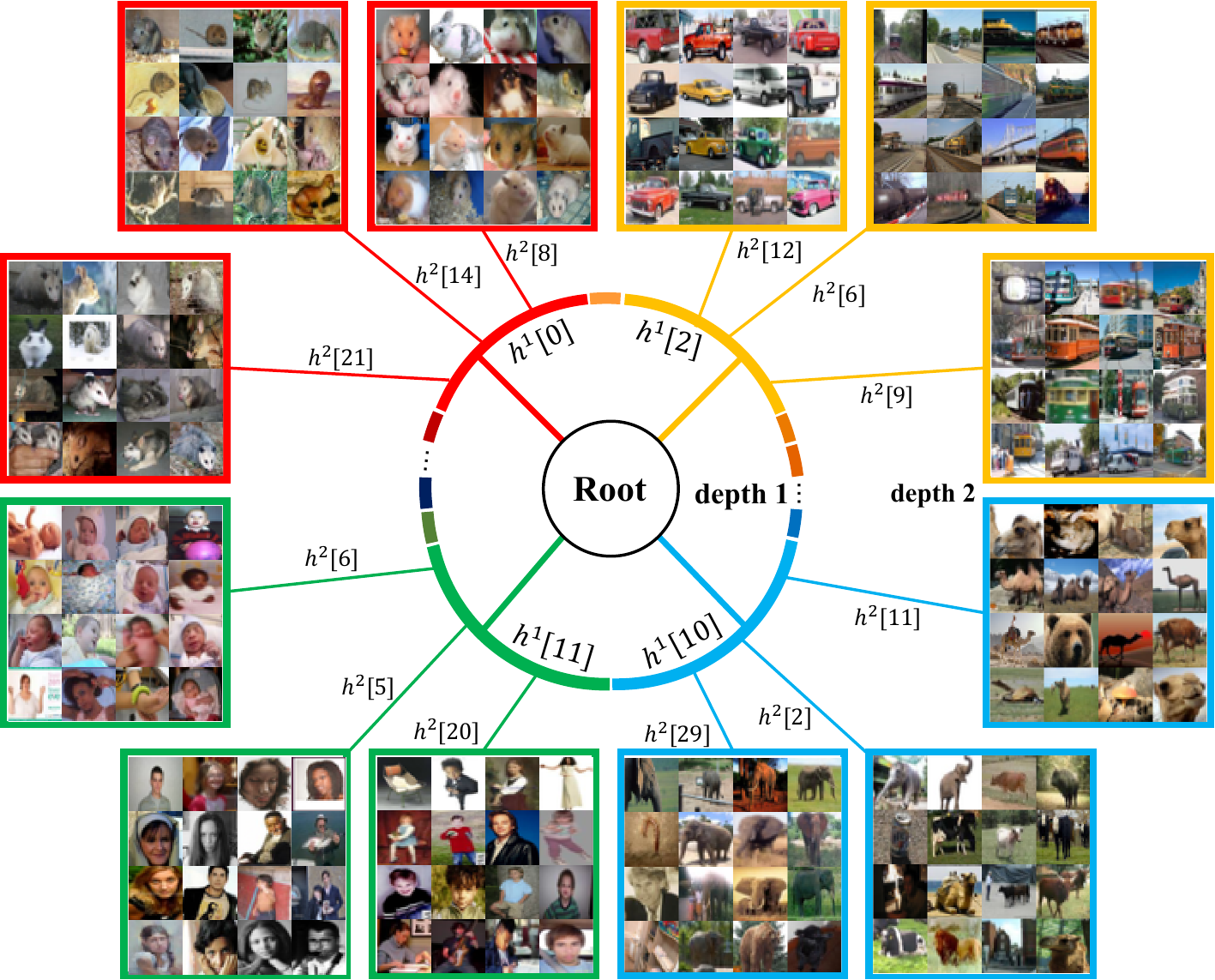}
    \caption{Visualization of the examples mapped by our trained three level hash codes $[\bfh^{(1)}, \bfh^{(2)}]$ on Cifar-100. Each parent node (denoted as depth 1) is color coded in red, yellow, blue, and green in \emph{cw} order. Each color coded box (denoted as depth 2) shows examples of the hashed items in each child node.}
    \label{fig:cifarhashtree}
\end{figure*} 

\subsection{ImageNet}
ImageNet ILSVRC-2012 \cite{imagenet} dataset has $1,000$ classes and comes with \emph{train} ($1,281,167$ images) and \emph{val} set ($50,000$ images).  We use the first nine splits of \emph{train} set to train our model, the last split of \emph{train} set for validation, and use \emph{validation} dataset to test the query performance.  We use the images downsampled to $32\times32$ from \cite{imgnet-down}. We finetune npairs base model and triplet base model as in \cite{jeong2018} and add a randomly initialized fully connected layer to learn hierarchical representation.  Then, we train the parameters in the newly added layer with other parameters fixed. When we train with npairs loss, we set the batch size to $1024$ and train for $15$k iterations decaying the learning rate to $0.3$ of previous learning rate after each $6$k iterations.  Also, when we train with triplet loss, we set the batch size to $512$ and train for $30$k iterations decaying the learning rate of $0.3$ of previous learning rate after each $10$k iterations.  Our results in \Cref{tab:imagenet} show that our method outperforms the state of the art deep metric learning base models in search accuracy while providing up to $1298\times$ speedup over exhaustive linear search. \Cref{tab:NMI} compares the NMI metric and shows that the hash table constructed from our representation yields buckets with significantly better class purity on both datasets and on both the base metric learning methods.

\subsection{ImageNetSplit}
In order to test the generalization performance of our learned  representation against previously unseen classes, we performed an experiment on ImageNet where the set of classes for training and testing are completely disjoint. Each class in ImageNet ILSVRC-2012 \cite{imagenet} dataset has super-class based on WordNet \cite{wordnet}. We select $119$ super-classes which have exactly two sub-classes in $1000$ classes of ImageNet ILSVRC-2012 dataset. Then, we split the two sub-classes of each $119$ super-class into $C_{\text{train}}$ and $C_{\text{test}}$, where $C_{\text{train}}\cap C_{\text{test}}=\emptyset$. Section D in the supplementary material shows the class names in $C_{\text{train}}$ and $C_{\text{test}}$. We use the images downsampled to $32\times32$ from \cite{imgnet-down}. We train the models with triplet embedding on $C_{\text{train}}$ and test the models on $C_{\text{test}}$. The batch size is set to $200$ in ImageNetSplit dataset. We finetune the base model for $50$k iterations and decayed the learning rate to $0.3$ of previous learning rate after $40$k iterations when we optimize our methods. We also perform vector quantization with the centroids obtained from $C_{\text{train}}$ (denote as `VQ-train') and $C_{\text{test}}$ (denote as `VQ-test'), respectively. \Cref{tab:imagenetsplit} shows our method preserves the accuracy without compromising the speedup factor.\\
\vspace{-1em}

Note, in all our experiments in \Cref{tab:cifar,tab:imagenet,tab:NMI,tab:imagenetsplit}, while all the baseline methods show severe degradation in the speedup over the code compound parameter $k_s$, the results show that the proposed method robustly withstands the speedup degradation over $k_s$. This is because our method 1) greatly increases the quantization granularity beyond other baseline methods and 2) hashes the items more uniformly over the buckets. In effect, indexing multiple buckets in our quantized representation does not as adversarially effect the search speedup as other baselines. \Cref{fig:cifarhashtree} shows a qualitative result with npairs network on Cifar-100, where $d=32, k=2, k_s=1$. As an interesting side effect, our qualitative result indicates that even though our method does not use any super/sub-class labels or the entire label information during training, optimizing for the objective in \Cref{eqn:energy_seq} naturally discovers and organizes the data exhibiting a meaningful hierarchy where similar subclasses share common parent nodes.

\section{Conclusion} 
We have shown a novel end-to-end learning algorithm where the quantization granularity is significantly increased via hierarchically quantized representations while preserving the search accuracy and maintaining the computational complexity practical for the mini-batch stochastic gradient descent setting. This not only provides the state of the art accuracy results but also unlocks significant improvement in inference speedup providing the highest reported inference speedup on Cifar100 and ImageNet datasets respectively.

\section*{Acknowledgements}
This work was partially supported by Kakao, Kakao Brain and Basic Science Research Program through the National Research Foundation of Korea (NRF) (2017R1E1A1A01077431). Hyun Oh Song is the corresponding author.

\newpage
{\small
\bibliographystyle{ieee}
\bibliography{cvpr_2019}
}
\newpage
\section*{Supplementary material}
\section*{A. Proofs for equivalence}
\subsection*{A.1 Recap for flow definition}
\vspace{-2em}
\begin{align}
&(i)~ f_z(s, a_p) = k_s \nonumber\\
&(ii)~ f_z(a_p, b_{r,q}) = \bfz_p[q] \nonumber\\
&(iii)~ f_z\left( \left( b_{r,q}, b_{0,q}\right)_i \right) = \begin{cases} 1 & ~\forall i < \sum_{p:a_p\in A_r} \mathbf{z}_p[q]\\ 0 & \text{otherwise}\end{cases} \nonumber\\
&(iv)~ f_z\left( \left(b_{0,q}, t\right)_j \right) = \begin{cases} 1 & ~\forall j < \sum_{p=1}^{n_c} \mathbf{z}_p[q]\\0 & \text{otherwise}\end{cases}
\label{eqn:flow_def}
\end{align}

\subsection*{A.2 Proofs for lemma1 and lemma2}
\begin{lemma}\label{lemma:fo}
Given the minimum cost flow $\{f_o(e)\}_{e \in E'}$ of the network $G'$, the total cost of the flow is $\sum_{e\in E'} v(e)f_o(e) = \sum_{p=1}^{n_c} -{\mathbf{c}_p}^T\mathbf{z'}_p + \sum_{r=1}^l \sum_{p_1\neq p_2\in \{p|a_p\in A_r\}} \alpha {\mathbf{z'}_{p_1}}^T\mathbf{z'}_{p_2} + \sum_{p_1\neq p_2} \beta {\mathbf{z'}_{p_1}}^T\mathbf{z'}_{p_2}$.
\end{lemma}
\begin{proof}
    The total minimum cost flow is 
    \scriptsize
    \begin{align}
    \sum_{e\in E'} &v(e)f_o(e) = \underbrace{\sum_{a_p\in A} v(s,a_p) f_o(s,a_p)}_{\text{Flow from source to vertices in $A$}} + \nonumber\\
    &\underbrace{\sum_{r=0}^l \sum_{a_p\in A_r}\sum_{b_{r,q}\in B_r} v(a_p,b_{r,q}) f_o(a_p,b_{r,q})}_{\text{Flow from vertices in $A$ to vertices in $B_r$}} + \nonumber\\
    &\underbrace{\sum_{r=1}^l \sum_{b_{r,q}\in B_r}\sum_{i=0}^{g_r-1} v((b_{r,q},b_{0,q})_i) f_o((b_{r,q},b_{0,q})_i)}_{\text{Flow from vertices in $B_r$ to vertices in $B_0$}} + \nonumber\\
    &\underbrace{\sum_{b_{0,q}\in B_0}\sum_{j=0}^{n_c-1} v((b_{0,q}, t)_j) f_o((b_{0,q}, t)_j)}_{\text{Flow from vertices in $B_0$ to sink}} \nonumber
    \end{align}
    \normalsize
    Also, for $r > 0$, denote the amount of input flow at each vertex $b_{r,q} \in B_r$ given the minimum cost flow as $y'_{r,q} = \sum_{a_p\in A_r}f_o(a_p, b_{r,q}) = \sum_{p:a_p\in A_r} \mathbf{z'}_p[q]$. Also, denote the amount of input flow at each vertex $b_{0,q} \in B_0$ as $y'_{0,q} = \sum_{p:a_p\in A_0} f_o(a_p, b_{0,q}) + \sum_{r=1}^l y'_{r,q}= \sum_{p=1}^{n_c} \mathbf{z'}_p[q]$. Then, from the optimality of the minimum cost flow, $f_o((b_{r,q}, b_{0,q})_i) = \begin{cases} 1 & \forall~ i < y'_{r,q}\\ 0 & \text{otherwise}\end{cases}$ and $f_o((b_{0,q}, t)_j) = \begin{cases} 1 & ~\forall j<y'_{0,q}\\ 0 & \text{otherwise} \end{cases}$. Therefore, the total cost for optimal flow is
    \footnotesize
    \begin{align}
    &\sum_{e\in E'} v(e)f_o(e) =0+\sum_{r=0}^l \sum_{a_p\in A_r}\sum_{b_{r,q}\in B_r} -\mathbf{c}_p[q]\mathbf{z'}_p[q] + \nonumber\\
    &~~~~~~~\sum_{r=1}^l \sum_{b_{r,q}\in B_r}\sum_{i=0}^{y'_{r,q}-1} 2\alpha i+\sum_{b_{0,q}\in B_0}\sum_{j=0}^{y'_{0,q}-1} 2\beta j \nonumber\\
    &= \sum_{p} -\mathbf{c}_p^T\mathbf{z'}_p + \sum_{r=1}^l \sum_{b_{r,q}\in B_r} \alpha y'_{r,q}(y'_{r,q}-1)+\sum_{b_{0,q}\in B_0}\beta y'_{0,q}(y'_{0,q}-1) \nonumber\\
    &= \sum_{p} -\mathbf{c}_p^T\mathbf{z'}_p + \sum_{r=1}^l \sum_{b_{r,q}\in B_r} \alpha {y'_{r,q}}^2 - \sum_{r=1}^l \sum_{p:a_p \in A_r}\sum_{q=1}^d \alpha \mathbf{z'}_p[q] +\nonumber\\
    &~~~~~~~\sum_{b_{0,q}\in B_0}\beta {y'_{0,q}}^2 - \sum_{p=1}^{n_c} \sum_{q=1}^d \beta \mathbf{z}'_p[q] \nonumber\\
    &= \sum_{p} -\mathbf{c}_p^T\mathbf{z'}_p + \alpha \sum_{r=1}^l {\sum_{p:a_p \in A_r}\mathbf{z'}_p}^T \sum_{p:a_p \in A_r} \mathbf{z'}_p - \alpha \sum_{r=1}^l \sum_{p:a_p \in A_r}{\mathbf{z'}_p}^T{\mathbf{z'}_p}\nonumber\\
    &~~~~~~~+\beta{\sum_{p=1}^{n_c}\mathbf{z'}_{p}}^T\sum_{p=1}^{n_c}\mathbf{z'}_{p} - \beta \sum_{p=1}^{n_c}{\mathbf{z'}_p}^T \mathbf{z'}_p \nonumber\\
    &=\sum_{p=1}^{n_c} -{\mathbf{c}_p}^T\mathbf{z'}_p+\sum_{r=1}^l \sum_{p_1\neq p_2\in \{p|a_p \in A_r\}} \alpha {\mathbf{z'}_{p_1}}^T\mathbf{z'}_{p_2} + \sum_{\substack{p_1\neq p_2}} \beta {\mathbf{z}'_{p_1}}^T\mathbf{z'}_{p_2}.\nonumber
    \end{align}
    \normalsize
\end{proof}
\begin{lemma}\label{lemma:fz}
Given a feasible flow $\{f_z(e)\}_{e \in E'}$ of the network $G'$, the total cost of the flow is $\sum_{e\in E'} v(e)f_z(e) = \sum_{p=1}^{n_c} -{\mathbf{c}_p}^T\mathbf{z}_p + \sum_{r=1}^l \sum_{p_1\neq p_2\in \{p|a_p \in A_r\}} \alpha {\mathbf{z}_{p_1}}^T\mathbf{z}_{p_2} + \sum_{p_1\neq p_2} \beta {\mathbf{z}_{p_1}}^T\mathbf{z}_{p_2}$.
\end{lemma}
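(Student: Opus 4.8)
The plan is to compute $\sum_{e\in E'} v(e) f_z(e)$ directly by grouping the edges of $G'$ into the four natural categories and showing that the contributions reproduce, term by term, the three summands on the right-hand side. This parallels the computation in Lemma~1, but here I only know that $\{f_z(e)\}$ is the feasible flow constructed in \Cref{eqn:flow_def}, not that it is cost-optimal; fortunately the construction already pins down every edge flow, so no optimality argument is needed.

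First I would dispose of the edges incident to the source: by \Cref{eqn:flow_def}(i) each carries flow $k_s$, but $v(s,a_p)=0$, so this category contributes nothing. Next, for the edges from $A$ to the $B_r$'s, \Cref{eqn:flow_def}(ii) gives $f_z(a_p,b_{r,q})=\bfz_p[q]$ and the cost is $-\bfc_p[q]$, so summing over $r$, $a_p\in A_r$, and $q$ yields $\sum_{p=1}^{n_c} -\bfc_p^\intercal \bfz_p$, the unary term. The work is in the last two categories. For the $B_r\!\to\!B_0$ edges I would set $y_{r,q}=\sum_{p:a_p\in A_r}\bfz_p[q]$ (the input flow at $b_{r,q}$, as already identified in the feasibility part of the proof of \Cref{thm:equivalence}); by \Cref{eqn:flow_def}(iii) exactly the edges indexed $i=0,\ldots,y_{r,q}-1$ carry unit flow, so their total cost is $\sum_{i=0}^{y_{r,q}-1} 2\alpha i = \alpha\, y_{r,q}(y_{r,q}-1)$. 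Likewise, with $y_{0,q}=\sum_{p=1}^{n_c}\bfz_p[q]$, the $B_0\!\to\!t$ edges contribute $\sum_{j=0}^{y_{0,q}-1} 2\beta j = \beta\, y_{0,q}(y_{0,q}-1)$.

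It then remains to turn these quadratic-in-$y$ expressions into the stated pairwise sums. The key algebraic identity is $y_{r,q}(y_{r,q}-1) = \big(\sum_{p:a_p\in A_r}\bfz_p[q]\big)^2 - \sum_{p:a_p\in A_r}\bfz_p[q]^2 = \sum_{p_1\neq p_2\in\{p\mid a_p\in A_r\}}\bfz_{p_1}[q]\,\bfz_{p_2}[q]$, using that $\bfz_p[q]\in\{0,1\}$ so $\bfz_p[q]^2=\bfz_p[q]$. Summing this over $q=1,\ldots,d$ collapses the coordinatewise products into inner products $\bfz_{p_1}^\intercal\bfz_{p_2}$, and summing over $r=1,\ldots,l$ produces exactly the sibling-penalty term $\sum_{r=1}^l\sum_{p_1\neq p_2\in\{p\mid a_p\in A_r\}}\alpha\,\bfz_{p_1}^\intercal\bfz_{p_2}$. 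The same manipulation applied to $y_{0,q}(y_{0,q}-1)$, now with the outer sum ranging over all $p_1\neq p_2\in\{1,\ldots,n_c\}$, gives the orthogonality term $\sum_{p_1\neq p_2}\beta\,\bfz_{p_1}^\intercal\bfz_{p_2}$. Adding the four category contributions completes the proof.

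I do not expect a genuine obstacle here: every edge flow is explicit, so the only content is the bookkeeping and the square-expansion identity above. The one place to be careful is the index range for the $B_0$ edges — the input flow at $b_{0,q}$ aggregates contributions from $A_0$ and from all of $B_1,\ldots,B_l$, so one must verify $y_{0,q}=\sum_{p=1}^{n_c}\bfz_p[q]$ (already done in the feasibility check) before expanding the square over all $n_c$ classes rather than within a single $A_r$; this is exactly why the orthogonality term runs over all pairs while the sibling term is block-diagonal.
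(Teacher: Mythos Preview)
Your proposal is correct and follows essentially the same approach as the paper's own proof: decompose the total cost by edge category, use \Cref{eqn:flow_def}(iii)--(iv) to obtain $\alpha\,y_{r,q}(y_{r,q}-1)$ and $\beta\,y_{0,q}(y_{0,q}-1)$, and then expand $(\sum \bfz_p[q])^2 - \sum \bfz_p[q]$ (using $\bfz_p[q]^2=\bfz_p[q]$) to recover the pairwise inner-product sums. The paper's proof is line-for-line the same computation.
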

\begin{proof}
    The total cost proof is similar to \Cref{lemma:fo} except that we use the flow conditions from \Cref{eqn:flow_def} (iii) and \Cref{eqn:flow_def} (iv) instead of the optimality of the flow. 
    \scriptsize
    \begin{align}
    &\sum_{e\in E'} v(e)f_z(e) = 0+\sum_{r=0}^l \sum_{a_p\in A_r}\sum_{b_{r,q}\in B_r} -\mathbf{c}_p[q]\mathbf{z}_p[q] +\nonumber\\
    &~~~~~~~\sum_{r=1}^l \sum_{b_{r,q}\in B_r}\sum_{i=0}^{y_{r,q}-1} 2\alpha i+\sum_{b_{0,q}\in B_0}\sum_{j=0}^{y_{0,q}-1} 2\beta j \nonumber\\
    &= \sum_{p} -\mathbf{c}_p^T\mathbf{z}_p + \sum_{r=1}^l \sum_{b_{r,q}\in B_r} \alpha y_{r,q}(y_{r,q}-1)+\sum_{b_{0,q}\in B_0}\beta y_{0,q}(y_{0,q}-1) \nonumber\\
    &= \sum_{p} -\mathbf{c}_p^T\mathbf{z}_p + \sum_{r=1}^l \sum_{b_{r,q}\in B_r} \alpha {y_{r,q}}^2 - \sum_{r=1}^l \sum_{p:a_p \in A_r}\sum_{q=1}^d \alpha \mathbf{z}_p[q] +\nonumber\\
    &~~~~~~~\sum_{b_{0,q}\in B_0}\beta {y_{0,q}}^2 - \sum_{p=1}^{n_c} \sum_{q=1}^d \beta \mathbf{z}_p[q] \nonumber\\
    &= \sum_{p} -\mathbf{c}_p^T\mathbf{z}_p +\alpha \sum_{r=1}^l {\sum_{p:a_p \in A_r}\mathbf{z}_p}^T \sum_{p:a_p \in A_r} \mathbf{z}_p - \alpha \sum_{r=1}^l \sum_{p:a_p \in A_r}{\mathbf{z}_p}^T{\mathbf{z}_p}\nonumber\\
    &~~~~~~~+\beta{\sum_{p=1}^{n_c}\mathbf{z}_{p}}^T\sum_{p=1}^{n_c}\mathbf{z}_{p} - \beta \sum_{p=1}^{n_c}{\mathbf{z}_p}^T \mathbf{z}_p \nonumber\\
    &=\sum_{p=1}^{n_c} -{\mathbf{c}_p}^T\mathbf{z}_p+\sum_{r=1}^l \sum_{p_1\neq p_2\in \{p|a_p \in A_r\}} \alpha {\mathbf{z}_{p_1}}^T\mathbf{z}_{p_2} +\sum_{\substack{p_1\neq p_2}} \beta {\mathbf{z}_{p_1}}^T\mathbf{z}_{p_2} \nonumber
    \end{align}
    \normalsize
\end{proof}

\section*{B. Time complexity}
\begin{figure}[h]
    \captionsetup{font=small}
    \centering
    \begin{tikzpicture}[scale=0.9]
    \begin{axis}[
        legend pos={north west},
        xtick={16,32,64,128},
        xlabel={$d$},
        ylabel={Average wall clock run time (sec)},
        grid=major
    ]
    \addplot coordinates {
        (16, 0.00970) (32, 0.01895) (64, 0.03714) (128, 0.07361)
    };
    \addplot coordinates {
        (16, 0.01952) (32, 0.03868) (64, 0.07517) (128, 0.15410)
    };
    \addplot coordinates {
        (16, 0.03956) (32, 0.07795) (64, 0.15434) (128, 0.31331)
    };
    \addplot coordinates {
        (16, 0.08018) (32, 0.15851) (64, 0.32678) (128, 0.64261)
    };

    \legend{$n_c=64$,$n_c=128$,$n_c=256$,$n_c=512$}
    \end{axis}
    \end{tikzpicture}
    \vspace{-0.5em}
    \captionof{figure}{Average wall clock run time of computing minimum cost flow on $G'$ per mini-batch using \cite{ortools}. In practice, the run time is approximately linear in $n_c$ and $d$. Each data point is averaged over 20 runs on machines with Intel Xeon E5-2650 CPU.}
    \label{fig:mcf_runtime} 
\end{figure}
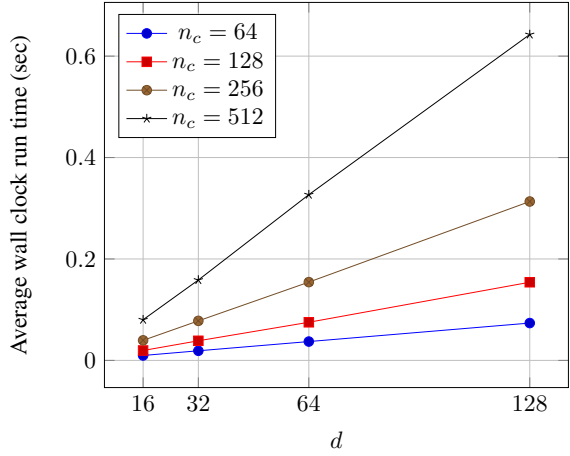

\section*{C. Effect of label remapping}
We performed the ablation study without the label remapping method. `Ours-r' in \Cref{tab:cifar_triplet} and \Cref{tab:cifar_npairs} shows the results without the remapping. The ablation study shows that the proposed hierarchical structure leads to much larger improvements than the remapping method.

\section*{D. ImagenetSplit Detail}
\Cref{tab:imgsplit_info} shows the $C_{\text{train}}$ and $C_{\text{test}}$ explicitly.

\begin{table*}[h]
\begin{adjustbox}{max width=\linewidth}
    \begin{tabular}{ll}
        \toprule
        \emph{train}& \emph{test}\\
        \cmidrule(r){1-1} \cmidrule(r){2-2}
        tench,electric ray,cock,jay&goldfish,stingray,hen,magpie\\
        common newt,spotted salamander,tree frog,loggerhead&eft,axolotl,tailed frog,leatherback turtle\\
        common iguana,agama,diamondback,trilobite&American chameleon,frilled lizard,sidewinder,centipede\\
        harvestman,quail,African grey,bee eater&scorpion,partridge,macaw,hornbill\\
        jacamar,echidna,flatworm,crayfish&toucan,platypus,nematode,hermit crab\\
        white stork,little blue heron,red-backed sandpiper,Walker hound&black stork,bittern,redshank,English foxhound\\
        Irish wolfhound,whippet,Staffordshire bullterrier,vizsla&borzoi,Italian greyhound,American Staffordshire terrier,German short-haired pointer\\
        English springer,schipperke,malinois,Siberian husky&Welsh springer spaniel,kuvasz,groenendael,malamute\\
        Cardigan,cougar,meerkat,dung beetle&Pembroke,lynx,mongoose,rhinoceros beetle\\
        ant,grasshopper,cockroach,cicada&bee,cricket,mantis,leafhopper\\
        dragonfly,Angora,hippopotamus,siamang&damselfly,wood rabbit,llama,gibbon\\
        indri,African elephant,lesser panda,sturgeon&Madagascar cat,Indian elephant,giant panda,gar\\
        hamper,bicycle-built-for-two,fireboat,cello&shopping basket,mountain bike,gondola,violin\\
        soup bowl,oxygen mask,china cabinet,Polaroid camera&mixing bowl,snorkel,medicine chest,reflex camera\\
        bottlecap,freight car,swab,fur coat&nipple,passenger car,broom,lab coat\\
        shower curtain,computer keyboard,bassoon,cliff dwelling&theater curtain,joystick,oboe,yurt\\
        loudspeaker,wool,harvester,oil filter&microphone,velvet,thresher,strainer\\
        accordion,bookcase,suit,beer glass&harmonica,wardrobe,diaper,goblet\\
        vestment,acoustic guitar,barrow,space heater&academic gown,electric guitar,shopping cart,stove\\
        crash helmet,vase,whiskey jug,cleaver&football helmet,beaker,water jug,letter opener\\
        candle,airship,combination lock,electric locomotive&spotlight,balloon,padlock,steam locomotive\\
        barometer,stethoscope,binoculars,Dutch oven&scale,syringe,projector,rotisserie\\
        frying pan,grand piano,church,swing&wok,upright,mosque,teddy\\
        knee pad,crossword puzzle,kimono,catamaran&apron,jigsaw puzzle,abaya,trimaran\\
        feather boa,cocktail shaker,holster,apiary&stole,saltshaker,scabbard,boathouse\\
        birdhouse,pirate,bobsled,slot&bell cote,wreck,dogsled,vending machine\\
        canoe,crutch,pay-phone,cinema&yawl,flagpole,dial telephone,home theater\\
        parking meter,moving van,barbell,ice lolly&stopwatch,police van,dumbbell,ice cream\\
        broccoli,zucchini,acorn squash,orange&cauliflower,spaghetti squash,butternut squash,lemon\\
        eggnog,alp,lakeside&cup,volcano,seashore\\
        \bottomrule
    \end{tabular}
    \end{adjustbox}
    \caption{Class names in train dataset and test dataset splitted from \cite{imagenet}.}
    \label{tab:imgsplit_info}
\end{table*}

\begin{table}[htbp]
\centering
\footnotesize
\begin{adjustbox}{max width=\columnwidth}
    \begin{tabular}{cc cccccccc}
        \addlinespace[-\aboverulesep]
        \cmidrule[1pt](r){1-6} \cmidrule[1pt](r){7-10}
        &      &  \multicolumn{4}{c}{\emph{test}}                                & \multicolumn{4}{c}{\emph{train}}                        \\
        \cmidrule(r){1-6} \cmidrule(r){7-10}
        &Method& SUF            & Pr@1           & Pr@4          & Pr@16         & SUF            & Pr@1           & Pr@4          & Pr@16 \\
        \cmidrule(r){1-6} \cmidrule(r){7-10}
        $k_s$
        &Metric& 1.00          &  56.78         & 55.99         & 53.95         & 1.00           &  62.64         & 61.91         & 61.22  \\
        \cmidrule(r){1-6} \cmidrule(r){7-10}
        $1$
        &LSH    & \textbf{138.83}&52.52          & 48.67         &39.71          &\textbf{135.64} & 60.45          &58.10          &54.00  \\
        &Th     & 41.21         &  54.82         & 52.88         & 48.03         & 43.19          &  61.56         & 60.24         & 58.23 \\  
        &VQ     & 22.78         &  56.74         & 55.94         & 53.77         & 40.35          & 62.54          & 61.78         & 60.98 \\ 
        &\cite{jeong2018}& 97.67&  57.63         & 57.16         & 55.76         & 97.77          &  63.85         & 63.40         & 63.39 \\
        &Ours-r & 98.12         & 58.16          & 57.39         & 56.01         & 98.45          & 64.38          &63.94          &63.86  \\
        &Ours   & 97.67         & \textbf{58.42} & \textbf{57.88}& \textbf{56.58}& 97.28          &  \textbf{64.73}& \textbf{64.63}& \textbf{64.69}\\
        \cmidrule(r){1-6} \cmidrule(r){7-10} 
        $2$
        &Th     & 14.82         &  56.55         & 55.62         & 52.90         & 15.34          &  62.41         & 61.68         & 60.89\\ 
        &VQ     & 5.63          &  56.78         & 56.00         & 53.99         & 6.94           &  62.66         & 61.92         & 61.26\\ 
        &\cite{jeong2018}   & 76.12         &  57.30         & 56.70         & 55.19         & 78.28          & 63.60      & 63.19 & 63.09\\
        &Ours-r & \textbf{101.26} & 58.18        & \textbf{57.68}& \textbf{56.43}& \textbf{100.29}& \textbf{65.00} & \textbf{64.52}& \textbf{64.42}\\
        &Ours   & 98.38&  \textbf{58.39}& 57.51& 56.09& 97.20 &  64.35& 63.91& 63.81\\
        \cmidrule(r){1-6} \cmidrule(r){7-10}
        $3$
        &Th     & 7.84          &  56.78         & 55.91         & 53.64         & 8.04           &  62.66         & 61.88         & 61.16\\
        &VQ     & 2.83          &  56.78         & 55.99         & 53.95         & 2.96           &  62.62         & 61.92         & 61.22\\ 
        &\cite{jeong2018}   & 42.12         &  56.97         & 56.25         & 54.40         & 44.36          &  62.87     & 62.22 & 61.84\\ 
        &Ours-r & \textbf{97.78} &57.58          & 57.14         & 55.70         & \textbf{97.25} & \textbf{63.95} & \textbf{63.58}&\textbf{63.48}\\
        &Ours   & 94.55& \textbf{58.19} & \textbf{57.42}& \textbf{56.02}& 93.69&63.60&63.35    &63.32\\
        \cmidrule(r){1-6} \cmidrule(r){7-10}
        $4$
        &Th     & 4.90          & 56.84          & 56.01         & 53.86         & 5.00           &  62.66         & 61.94         & 61.24 \\ 
        &VQ     & 1.91          & 56.77          & 55.99         & 53.94         & 1.97           & 62.62          & 61.91         & 61.22 \\  
        &\cite{jeong2018}& 16.19&  57.11         & 56.21         & 54.20         & 16.52          &  62.81         & 62.14         & 61.58 \\ 
        &Ours-r & \textbf{98.36} & 57.58         & 57.18         & 55.94         & \textbf{97.88} & 63.90          & 63.32         & 63.25 \\
        &Ours   & 92.18&  \textbf{58.52}& \textbf{57.79}& \textbf{56.22}& 91.27 &  \textbf{64.20}& \textbf{63.95}& \textbf{63.63}\\
        \addlinespace[-\belowrulesep]
        \cmidrule[1pt](r){1-6} \cmidrule[1pt](r){7-10}
    \end{tabular}
\end{adjustbox}
    \caption{Results with Triplet network with hard negative mining. Querying test data against a hash table built on \emph{test} set and a hash table built on \emph{train} set on Cifar-100.}
    \label{tab:cifar_triplet}
\end{table} 

\begin{table}[htbp]
\centering
\footnotesize
\begin{adjustbox}{max width=\columnwidth}
    \begin{tabular}{cc cccccccc cccccccc}
        \addlinespace[-\aboverulesep]
        \cmidrule[1pt](r){1-6} \cmidrule[1pt](r){7-10}
        &      &\multicolumn{4}{c}{\emph{test}}                                & \multicolumn{4}{c}{\emph{train}} \\
        \cmidrule(r){1-6} \cmidrule(r){7-10}
        &Method& SUF            & Pr@1           & Pr@4          & Pr@16         & SUF            & Pr@1           & Pr@4          & Pr@16 \\
        \cmidrule(r){1-6} \cmidrule(r){7-10}
        $k_s$
        &Metric &1.00            & 57.05          & 55.70         & 53.91         & 1.00           &61.78           & 60.63         & 59.73 \\
        \cmidrule(r){1-6} \cmidrule(r){7-10}
        $1$
        &LSH    & 29.74         &  53.55         &50.75          &43.03          &30.75           &59.87           &58.34          &55.35\\
        &Th     &12.72          &  54.95         & 52.60         & 47.16         & 13.65          & 60.80          & 59.49         & 57.27 \\  
        &VQ    &34.86          & 56.76          & 55.35         & 53.75         & 31.35          &  61.22         & 60.24         & 59.34 \\ 
        &\cite{jeong2018}&54.85          & 58.19 & 57.22         & 55.87         & 54.90          &  \textbf{63.11}& 62.29         & 61.94 \\
        &Ours-r & 95.30         & 58.04          & 57.31         & 56.22         &90.63           & 62.55         &62.15           &61.77\\ 
        &Ours   &\textbf{101.1}& \textbf{58.28}  & \textbf{57.79}& \textbf{56.92}& \textbf{97.47} &  63.06         & \textbf{62.62}& \textbf{62.44} \\
        \cmidrule(r){1-6} \cmidrule(r){7-10}
        $2$
        &Th     &5.09           &  56.52         & 55.28         & 53.04         & 5.36           &  61.65         & 60.50         & 59.50  \\ 
        &VQ     &6.08           &  57.13         & 55.74         & 53.90         & 5.44           &  61.82         & 60.56         & 59.70  \\ 
        &\cite{jeong2018}&16.20          &  57.27         & 55.98         & 54.42         & 16.51          &  61.98         & 60.93         & 60.15 \\
        &Ours-r & 66.74 &  57.73 & 57.01 & 55.66 & 67.46 & 62.76 & 61.87 & 61.36 \\
        &Ours   &\textbf{69.48} &  \textbf{57.60}& \textbf{56.98}& \textbf{55.82}& \textbf{69.91} &  \textbf{62.19}& \textbf{61.71}& \textbf{61.27} \\
        \cmidrule(r){1-6} \cmidrule(r){7-10}
        $3$
        &Th     & 3.10           &  56.97         & 55.56         & 53.76         & 3.21           &  61.75         & 60.66         & 59.73 \\
        &VQ     & 2.66           &  57.01         & 55.69         & 53.90         & 2.36           &  61.78         & 60.62         & 59.73 \\ 
        &\cite{jeong2018}& 7.25           &  57.15         & 55.81         & 54.10         & 7.32           &  61.90         & 60.80         & 59.96 \\ 
        &Ours-r & 55.83 & \textbf{57.81} &56.55 &55.11 &57.11& 62.20& \textbf{61.50} & 60.90 \\
        &Ours   & \textbf{57.09}&  57.56& \textbf{56.70}& \textbf{55.41}& \textbf{58.62} &  \textbf{62.30}& 61.44& \textbf{60.91} \\
        \cmidrule(r){1-6} \cmidrule(r){7-10}
        $4$
        &Th     & 2.25           &  57.02         & 55.64         & 53.88         & 2.30           & 61.78          & 60.66         & 59.75 \\ 
        &VQ     & 1.66           & 57.03          & 55.70         & 53.91         & 1.55           &  61.78         & 60.62         & 59.73 \\  
        &\cite{jeong2018}   & 4.51           & 57.15          & 55.77         & 54.01         & 4.52           & 61.81          & 60.69         & 59.77 \\ 
        &Ours-r & 48.04 & \textbf{57.76} & 56.70 & 55.11 & 49.73 & 62.12 & 61.30 & 60.74\\
        &Ours   & \textbf{49.43} & 57.75 & \textbf{56.79}& \textbf{55.50}& \textbf{50.80} & \textbf{62.43} & \textbf{61.65}& \textbf{61.01} \\
        \addlinespace[-\belowrulesep]
        \cmidrule[1pt](r){1-6} \cmidrule[1pt](r){7-10}
    \end{tabular}
\end{adjustbox}
    \caption{Results with Npairs \cite{npairs} network. Querying test data against a hash table built on \emph{test} set and a hash table built on \emph{train} set on Cifar-100.}
    \label{tab:cifar_npairs}
\end{table}

\end{document}